\documentclass[pmlr]{jmlr}

\usepackage{amsfonts}
\usepackage{mathtools}
\usepackage{xcolor}
\usepackage{bbold}
\usepackage{booktabs}
\usepackage{longtable}
\usepackage{cleveref}
\usepackage{hyperref}

\newcommand{\E}[1]{\mathbb{E}\left[#1\right]}
\newcommand{\Ex}[2]{\mathbb{E}_{#1}\left[#2\right]}

\renewcommand{\Pr}[1]{\mathbb{P}\left(#1\right)}

\newcommand{\Tmon}{T_{\mathrm{mon}}}

\title{Conformal Prediction via Transported Beta Laws}

\author{
  \Name{Thiago R. Ramos}
  \Email{thiagorr@ufscar.br}\\
  \addr Federal University of S\~ao Carlos
  \AND
  \Name{Helton Graziadei}
  \Email{helton@ufscar.br}\\
  \addr Federal University of S\~ao Carlos
  \AND
  \Name{Luben M. C. Cabezas}
  \Email{lucruz45.cab@gmail.com}\\
  \addr Federal University of S\~ao Carlos, University of S\~ao Paulo, \\ Inria and Université Grenoble Alpes}

\makeatletter
\renewcommand*{\@jmlrenddoc}{%
  \FloatBarrier
  \phantomsection
  \protected@edef\@currentlabelname{end of \@shorttitle}%
  \label{jmlrend}%
  \global\let\@reprint\@empty
}
\makeatother

\begin{document}

\maketitle

\thispagestyle{empty}
\pagestyle{empty}

\begin{abstract}
Split conformal prediction provides finite-sample marginal coverage under exchangeability, but this guarantee averages over the random calibration sample. We study instead the law of the calibration-conditional coverage induced by a realized conformal threshold. In the continuous i.i.d.\ setting this law is exactly \(\operatorname{Beta}(k,n+1-k)\), so the usual marginal guarantee corresponds to its mean. We take this beta law as a finite-sample reference object and quantify
departures from it using Wasserstein distances on \([0,1]\). The framework yields direct bounds on marginal coverage gaps and on bad-calibration probabilities, and separates different sources of non-i.i.d.\ behavior according to how they deform the beta reference: test-side shift acts through a transport map on the coverage scale, while calibration dependence changes the order-statistic law itself. We instantiate the framework in scale-shift, clustered, and stationary mixing settings, where
the induced deformations can be characterized explicitly or through Berry--Esseen approximations. Simulations on dependent processes confirm that the first-order approximation tracks the empirical Wasserstein distance even at moderate sample sizes.
\end{abstract}

\begin{keywords}
conformal prediction, optimal transport, Wasserstein distance, distribution shift
\end{keywords}

\section{Introduction}

Conformal prediction (CP) provides a general framework for constructing
prediction sets with finite-sample coverage guarantees
\citep{random_world_conformal,papadopoulos2002inductive,lei2014distribution}. In split conformal prediction, a training sample is used to fit a prediction rule and a separate calibration sample is used to choose a score threshold. For calibration scores \(S_1,\ldots,S_n\), with large scores indicating worse conformity, the usual threshold at nominal level \(\gamma\in(0,1)\) is
\[
    \hat q_{n,\gamma}
    =
    S_{(k_\gamma)},
    \qquad
    k_\gamma
    =
    \lceil (n+1)\gamma\rceil,
\]
where \(S_{(1)}\leq\cdots\leq S_{(n)}\) are the order statistics and
\(S_{(k)}=+\infty\) for \(k>n\). Under exchangeability, this construction
satisfies the familiar marginal guarantee
\[
    \gamma
    \leq
    \Pr{S_{n+1}\leq \hat q_{n,\gamma}}
    <
    \gamma+\frac{1}{n+1}.
\]

This guarantee is intentionally marginal: the probability averages over the random calibration sample as well as the test score. Once a calibration sample has been drawn, the threshold is fixed; however, across realizations of the calibration sample, the resulting test coverage is itself a random quantity. By the tower property,
\[
    \Pr{S_{n+1}\leq \hat q_{n,\gamma}}
    =
    \E{
        \Pr{
            S_{n+1}\leq \hat q_{n,\gamma}
            \,\middle|\,
            S_1,\ldots,S_n
        }
    }.
\]
We call the inner probability the \emph{calibration-conditional coverage}. The
usual conformal guarantee controls its mean, but not its sampling distribution.

In the continuous i.i.d.\ case this distribution has an exact and universal
form. For \(1\leq k\leq n\), define
\[
    C_{n,k}
    :=
    \Pr{
        S_{n+1}\leq S_{(k)}
        \,\middle|\,
        S_1,\ldots,S_n
    }.
\]
Then
\[
    C_{n,k}
    \sim
    \operatorname{Beta}(k,n+1-k).
\]
The beta law is therefore not only a route to the standard marginal coverage
bound; it is the finite-sample law of the realized coverage itself
\citep{vovk_conditional_2012,angelopoulos2021gentle}. In a single deployment,
the practitioner draws one calibration sample, computes one threshold, and then
uses that threshold for future predictions. The realized coverage is one draw
from this beta law, while marginal validity constrains only its expectation. For
example, with \(n=30\) and \(\gamma=0.9\), so that
\(k_\gamma=28\), the reference law is
\(\operatorname{Beta}(28,3)\); its lower tail assigns probability about \(0.15\)
to realized coverage below \(0.85\), and about \(0.04\) to realized coverage
below \(0.80\). These bad-calibration probabilities are invisible to a statement
that only controls the mean.

Figure~\ref{fig:beta-bad-calibration} visualizes this lower-tail effect and
its decay with the calibration size.

\begin{figure}[h]
    \centering
    \includegraphics[width=.85\textwidth]{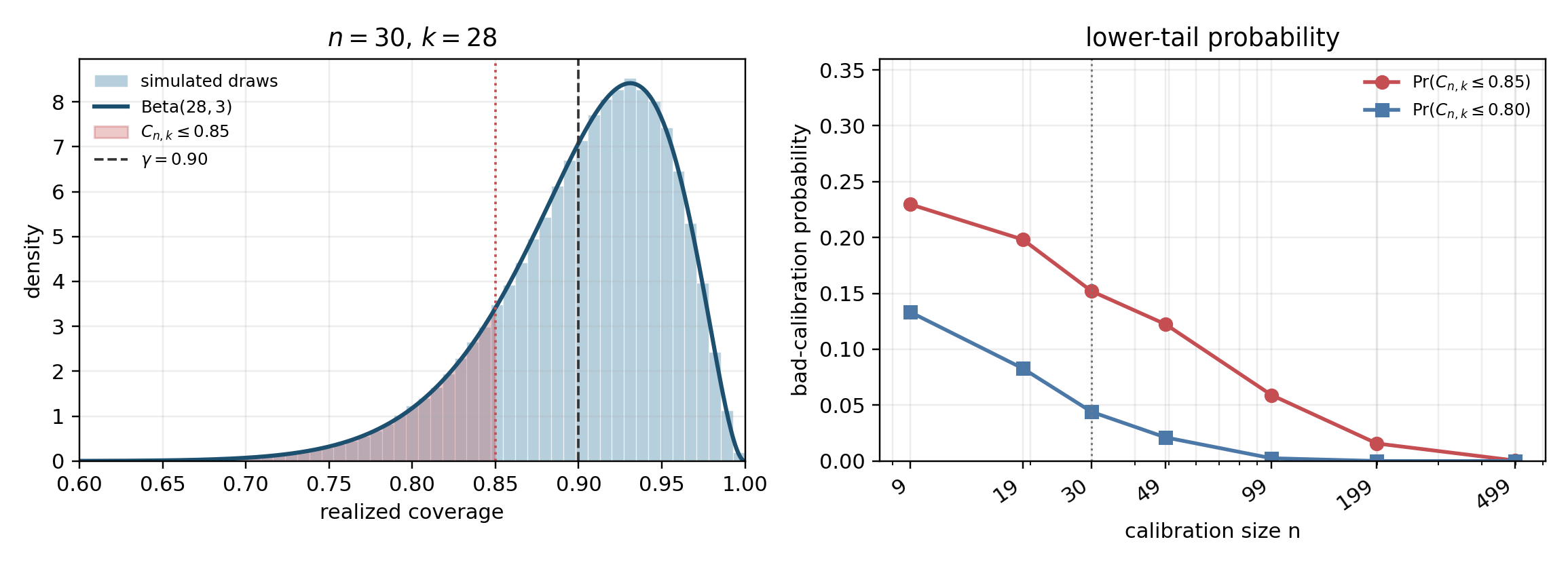}
    \caption{
    Bad-calibration events under  i.i.d.\ beta reference. Left: simulated
    draws of \(C_{n,k}\sim\operatorname{Beta}(28,3)\) for \(n=30\),
    \(\gamma=0.9\), and \(k=28\), with the lower tail
    \(C_{n,k}\leq 0.85\) shaded. Right: simulated lower-tail probabilities
    for \(k=\lceil(n+1)\gamma\rceil\) as the calibration size increases.
    }
    \label{fig:beta-bad-calibration}
\end{figure}

This observation suggests a different way to study conformal prediction beyond
the i.i.d.\ benchmark. Under distribution shift, dependence, or non-identical
score distributions, the calibration-conditional coverage need not follow the
beta law. Nevertheless, it plays the same structural role: marginal coverage is
obtained by averaging this realized coverage. The central question is therefore
how the law of the realized coverage departs from the beta reference, and how
that departure translates into coverage degradation.

We address this question using optimal transport on the coverage scale. Rather
than transporting calibration and test score distributions directly, we compare
the law of the calibration-conditional coverage variable with the beta benchmark
on \([0,1]\). Since the identity map on \([0,1]\) is \(1\)-Lipschitz, a
\(W_1\) comparison on this scale immediately controls the marginal coverage
gap. In this sense, the beta law remains the reference object, and
non-i.i.d.\ mechanisms are understood through the way they transport or deform
it.

Our contributions are as follows. First, we formulate conformal validity through
the law \(\nu_{n,k}\) of the realized calibration-conditional coverage and
introduce Wasserstein beta neighborhoods around the i.i.d.\ benchmark
\(\beta_{n,k}:=\operatorname{Beta}(k,n+1-k)\). Second, we prove that
\(W_1(\nu_{n,k},\beta_{n,k})\) directly bounds the marginal coverage gap and
derive corresponding bounds for bad-calibration probabilities. Third, we show
how different departures from the i.i.d.\ benchmark act on the beta reference:
test-side shift transports the law on the coverage scale, while calibration
dependence changes the order-statistic law itself. Counting-process and
Berry--Esseen arguments then provide concrete ways to quantify these
deformations. The examples are used to illustrate this framework rather than to
define separate procedures.

\subsection{Related work}

\paragraph{The beta law in conformal prediction.}
Under continuous i.i.d.\ scores, the calibration-conditional coverage of split
conformal prediction follows a beta distribution
\citep{vovk_conditional_2012,angelopoulos2021gentle}. The i.i.d.\ assumption
can be relaxed to exchangeability. \citet{marques2025universal} show that, for
a test sample exchangeable with the calibration scores, the coverage indicators
\(Z_i=\mathbf 1\{S_{n+i}\leq S_{(k)}\}\) form an exchangeable sequence; by de
Finetti's theorem, the empirical coverage converges almost surely as the test
sample size grows to a random limit, which a combinatorial argument identifies
as a beta law. We work in the i.i.d.\ framing for notational simplicity and
because it gives the probability integral transform used in our Wasserstein
analysis.

\paragraph{Conformal prediction and optimal transport.}
A growing line of work applies optimal transport tools to conformal prediction
under distribution shift, but the transported object differs from ours.
\citet{xu2025wasserstein} bound the coverage gap under joint covariate and
concept shift using the \(1\)-Wasserstein distance between calibration and test
score distributions, and use the bound as a training regularizer.
\citet{correia2025nonexchangeable} derive coverage-gap bounds formulated via
optimal transport distances on the score space and use unlabeled test data to
attenuate the gap. \citet{aolaritei2025conformal} model distribution shift through
Levy--Prokhorov ambiguity sets propagated through the score function, reducing a
high-dimensional shift to a one-dimensional problem on the score scale. These
approaches compare score laws, transported weights, or ambiguity sets. Our
comparison takes place after calibration: the transported object is the law of
the realized coverage itself on \([0,1]\). This is what makes the
Kantorovich--Rubinstein duality immediately yield a marginal coverage-gap
bound, without an intermediate quantile-level argument.

\paragraph{Conformal prediction beyond exchangeability.}
Several other lines of work quantify how violations of exchangeability affect
split conformal coverage. \citet{barber2023conformal} develop
non-exchangeable conformal prediction via weighted quantiles and nonsymmetric
algorithms, with finite-sample bounds that degrade in the total variation
distance to a reference exchangeable law. \citet{oliveira2024split} show that
vanilla split conformal prediction remains valid for a broad class of
non-exchangeable processes, including time series and spatiotemporal data, at
the cost of a small coverage penalty controlled by concentration and decoupling
arguments. In contrast, we replace total-variation or mixing-type penalties on
the data-generating law with \(W_1\) distances between realized-coverage laws.

A related asymptotic literature studies the joint behavior of conformal
coverage across a test sample. \citet{gazin2024transductive} show that the
joint distribution of conformal \(p\)-values in a full conformal setting is a
Pólya urn and prove a concentration inequality for their empirical c.d.f.\
under exchangeability. \citet{gazin2025asymptotics} identify the asymptotic
distribution of the false coverage proportion as \(n,m\to\infty\), recovering
the Kolmogorov law after rescaling, with extensions to weighted conformal and
covariate shift.  By contrast, our object is finite-sample and fixed-quantile: the law of ($C_{n,k}$), together with its non-i.i.d.\ analogue. The Pólya urn and Brownian bridge representations offer complementary large-test-sample descriptions; here we instead study how the finite-sample beta law itself is transported under shift and dependence.

\section{Background}\label{sec:background}
This section develops the theoretical background required for the remainder of the paper.

\subsection{Conformal prediction}

We recall the basic split conformal construction and the two classical facts that will serve as reference points throughout the paper. The first is the usual finite-sample marginal coverage guarantee. The second is a more refined description of the random coverage level induced by the realized calibration sample. 

Let $S \sim F_S$ be a continuous score distribution, and let
$S_1,\ldots,S_n$ denote calibration scores on this scale. In the i.i.d.\ case,
these scores form an independent sample from $F_S$.
Define the \emph{adjusted empirical conformal $\gamma$-quantile} as
\[
    \hat{q}_{n,\gamma}
    =
    S_{\!\left(\lceil \gamma(n+1)\rceil\right)},
\]
with the convention that $S_{(k)} = +\infty$ whenever $k > n$.
\footnote{This definition---using $n+1$ in place of $n$---is standard in
conformal prediction, as it guarantees finite-sample marginal coverage of at least $\gamma$: $\Pr{S_{n+1} \leq \hat{q}_{n,\gamma}} \geq \gamma$.}

The conformal threshold is therefore an empirical order statistic of the
calibration scores. The use of $\lceil \gamma(n+1)\rceil$ rather than the
usual empirical quantile index is the finite-sample correction that makes the
split conformal guarantee hold without asymptotics.

\begin{theorem}[Split Conformal Marginal Coverage]\label{thm:standard_CP}
Assume that $S_1,\ldots,S_n,S_{n+1}$ are exchangeable and that there are no
ties almost surely. Let
$
    k_\gamma
    =
    \lceil (n+1)\gamma\rceil
$
and
$
    \hat q_{n,\gamma}
    =
    S_{(k_\gamma)},
$
with the convention that $S_{(k)}=+\infty$ whenever $k>n$. Then
\[
    \gamma
    \leq
    \Pr{S_{n+1}\leq \hat q_{n,\gamma}}
    <
    \gamma+\frac{1}{n+1}.
\]
\end{theorem}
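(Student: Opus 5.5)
The plan is to go through the rank of the test score among all \(n+1\) scores. Let \(R\) be the rank of \(S_{n+1}\) within \(\{S_1,\ldots,S_{n+1}\}\), meaning \(R=1+\#\{i\le n: S_i<S_{n+1}\}\). Exchangeability together with the absence of ties will give that \(R\) is uniformly distributed on \(\{1,\ldots,n+1\}\). To see this, note that each index is equally likely to carry any given rank, since permuting the indices leaves the joint law unchanged. The no-ties assumption makes the ranks a well-defined permutation almost surely, so \(\Pr{R=j}=1/(n+1)\) for every \(j\).

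Next I would dispose of the trivial case \(k_\gamma>n\). There \(\hat q_{n,\gamma}=+\infty\), so coverage equals \(1\). Since \(k_\gamma=\lceil (n+1)\gamma\rceil\ge n+1\), we have \((n+1)\gamma>n\), which gives \(\gamma>n/(n+1)\). Hence \(1<\gamma+1/(n+1)\), and \(1\ge\gamma\) because \(\gamma<1\).

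For \(1\le k:=k_\gamma\le n\), the key step is the event identity \(\{S_{n+1}\le S_{(k)}\}=\{R\le k\}\) almost surely. If \(R\le k\), then at most \(k-1\) calibration scores lie strictly below \(S_{n+1}\), so the \(k\)-th smallest calibration score is at least \(S_{n+1}\). Conversely, if \(S_{n+1}\le S_{(k)}\), then with no ties \(S_{n+1}<S_{(k)}\), so at most \(k-1\) calibration scores are below \(S_{n+1}\), giving \(R\le k\). It follows that the coverage probability equals \(k/(n+1)\).

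Finally, \(\lceil x\rceil\) satisfies \(x\le\lceil x\rceil<x+1\). Applying this with \(x=(n+1)\gamma\) and dividing by \(n+1\) gives \(\gamma\le k/(n+1)<\gamma+1/(n+1)\). The only step needing care is the uniformity of the rank under exchangeability, since it is the sole place the hypothesis enters. I would justify it by noting that \(\Pr{R_{n+1}=j}=\Pr{R_{\sigma(n+1)}=j}\) for every permutation \(\sigma\), and that the ranks of the \(n+1\) indices partition the event space.
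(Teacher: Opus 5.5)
Your proof is correct and follows the route the paper indicates: the paper omits the proof, remarking only that it is elementary and relies on the uniformity of ranks. You fill this in with the identity $\{S_{n+1}\le S_{(k)}\}=\{R\le k\}$, the computation $\Pr{R\le k}=k/(n+1)$, and a correct treatment of the $k_\gamma>n$ convention.
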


The proof of this statement is elementary and relies on the uniformity of
ranks. This is the classical finite-sample marginal coverage guarantee of
split conformal prediction. It is the benchmark statement: under
exchangeability, the conformal threshold covers a fresh score with probability
at least $\gamma$, up to the unavoidable discretization error of order
$(n+1)^{-1}$.

For our purposes, however, the marginal guarantee is only the first layer of
the story. Once the calibration sample is realized, the order statistic
$S_{(k)}$ is fixed, and the coverage of this realized threshold is itself a
random quantity as the calibration sample varies. We next describe the
distribution of this calibration-conditional coverage for a fixed order statistic.

\begin{proposition}\label{prop:conditional_coverage_beta}
Let $S \sim F_S$ be a continuous random variable and let
$\{S_i\}_{i=1}^{n}$ be an i.i.d.\ sample from $F_S$. For
$1\leq k\leq n$, define
\[
    C_{n,k}
    =
    \Pr{
        S_{n+1}\leq S_{(k)}
        \,\middle|\,
        S_1,\ldots,S_n
    }.
\]
Then
\[
    C_{n,k}
    =
    F_S(S_{(k)}) \sim
    \operatorname{Beta}(k,n+1-k).
\]

\end{proposition}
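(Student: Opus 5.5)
The argument has two steps. First we identify $C_{n,k}$ with $F_S(S_{(k)})$, and then we compute the law of that quantity using the probability integral transform. Throughout, $S_{n+1}$ is taken to be an independent draw from $F_S$, independent of the calibration sample, as in the i.i.d.\ setting of the statement.

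For the first step, note that $S_{(k)}$ is a measurable function of $(S_1,\ldots,S_n)$. Since $S_{n+1}$ is independent of $(S_1,\ldots,S_n)$, the substitution (freezing) lemma for conditional expectations gives
\[
    \Pr{S_{n+1}\leq S_{(k)}\,\middle|\,S_1,\ldots,S_n}
    = g(S_{(k)}),
    \qquad
    g(s)=\Pr{S_{n+1}\leq s}=F_S(s).
\]
Hence $C_{n,k}=F_S(S_{(k)})$ almost surely. The assumption $1\le k\le n$ ensures that $S_{(k)}$ is finite.

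For the second step, set $U_i=F_S(S_i)$. Because $F_S$ is continuous, the $U_i$ are i.i.d.\ $\mathrm{Uniform}(0,1)$. Because $F_S$ is nondecreasing, applying it preserves the ordering, so $F_S(S_{(k)})=U_{(k)}$. Ties introduced by flat stretches of $F_S$ do not matter: a nondecreasing map sends the $k$-th order statistic to the $k$-th order statistic of the images even when ties occur. It therefore remains to show $U_{(k)}\sim\operatorname{Beta}(k,n+1-k)$.

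To do this, fix $u\in[0,1]$ and write
\[
    \Pr{U_{(k)}\le u}=\Pr{\#\{i:U_i\le u\}\ge k}=\sum_{j=k}^n\binom{n}{j}u^j(1-u)^{n-j}.
\]
Differentiating in $u$, the sum telescopes to
\[
    \frac{n!}{(k-1)!\,(n-k)!}\,u^{k-1}(1-u)^{n-k},
\]
which is the $\operatorname{Beta}(k,n+1-k)$ density. Alternatively, one can cite the standard binomial–incomplete beta identity.

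The only delicate point is the first step. It has to be made rigorous through the independence and freezing lemma, and it also needs the observation that $\Pr{S_{n+1}\le s}=F_S(s)$, with no issue of strict versus non-strict inequality because $F_S$ is continuous. The rest is routine.
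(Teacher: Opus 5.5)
Your proposal is correct and follows the same two-step route as the paper's proof. You use independence to freeze $S_{(k)}$ and get $C_{n,k}=F_S(S_{(k)})$, then the probability integral transform and order preservation to reduce to $U_{(k)}$. The only additions are your remark on ties and your derivation of $U_{(k)}\sim\operatorname{Beta}(k,n+1-k)$ from the binomial tail by telescoping, which the paper simply cites as a standard fact.
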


The proof is given in Appendix~\ref{app:proof_conditional_coverage_beta}.
This proposition gives a conditional refinement of the marginal CP guarantee.
It says that, in the continuous i.i.d.\ setting, the realized coverage of the threshold $S_{(k)}$ is not arbitrary: its law is exactly the beta law
associated with the corresponding order statistic.

Throughout the paper, we write
\[
    \beta_{n,k}
    :=
    \operatorname{Beta}(k,n+1-k)
\]
for this i.i.d.\ reference law.

In particular, letting
$
    k_\gamma = \lceil (n+1)\gamma\rceil,
$
Proposition~\ref{prop:conditional_coverage_beta} gives us
\[
\begin{aligned}
    \Pr{S_{n+1}\leq \hat{q}_{n,\gamma}}
    &=
    \E{
        \Pr{
            S_{n+1}\leq \hat{q}_{n,\gamma}
            \,\middle|\,
            S_1,\ldots,S_n
        }
    } \\
    &=
    \E{C_{n,k_\gamma}} \\
    &=
        \frac{\lceil (n+1)\gamma\rceil}{n+1}
    \in
    \left[
        \gamma,
        \gamma+\frac{1}{n+1}
    \right],
\end{aligned}
\]
which recovers the classic CP upper and lower bounds.

The same beta law also gives a precise way to discuss bad calibration. A bad
calibration event occurs when the realized coverage produced by the calibration
sample is substantially below the nominal level. The marginal split conformal
guarantee controls only the mean of \(C_{n,k}\); the lower tail of the beta law
controls how often an unfavorable calibration sample occurs.

\begin{theorem}[i.i.d.\ Bad Calibration Probabilities]
\label{thm:iid_bad_calibration}
Under the assumptions of Proposition~\ref{prop:conditional_coverage_beta}, let
\(B_{n,k}\sim\beta_{n,k}\). Then, for every \(t\in[0,1]\),
\[
    \Pr{C_{n,k}\leq t}
    =
    \Pr{B_{n,k}\leq t}.
\]
In particular, if
$
    k_\gamma=\lceil(n+1)\gamma\rceil\leq n,
$
then, for every \(\eta\in(0,\gamma)\),
\[
    \Pr{C_{n,k_\gamma}\leq \gamma-\eta}
    =
    \Pr{B_{n,k_\gamma}\leq \gamma-\eta}.
\]
\end{theorem}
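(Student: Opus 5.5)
The statement is essentially a restatement of Proposition~\ref{prop:conditional_coverage_beta} at the level of cumulative distribution functions, so the plan is to read it off directly from that result.

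First, I invoke Proposition~\ref{prop:conditional_coverage_beta}. Under its assumptions, $C_{n,k}=F_S(S_{(k)})$ has law $\beta_{n,k}$. Since $B_{n,k}\sim\beta_{n,k}$ by definition, the two random variables $C_{n,k}$ and $B_{n,k}$ have the same distribution on $[0,1]$. Equality in law means equality of the induced probability measures on Borel sets. Taking the Borel set $[0,t]$, or equivalently $(-\infty,t]$, for each $t\in[0,1]$ gives
\[
\Pr{C_{n,k}\leq t}=\beta_{n,k}([0,t])=\Pr{B_{n,k}\leq t}.
\]
No coupling between $C_{n,k}$ and $B_{n,k}$ is needed; the identity concerns marginal laws only. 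If one prefers to see it concretely, one can recall the order-statistic route. By continuity of $F_S$, the variables $U_i=F_S(S_i)$ are i.i.d.\ uniform on $[0,1]$. Monotonicity then gives $F_S(S_{(k)})=U_{(k)}$ almost surely, and
\[
\Pr{U_{(k)}\leq t}=\sum_{j=k}^{n}\binom{n}{j}t^j(1-t)^{n-j},
\]
which is the $\operatorname{Beta}(k,n+1-k)$ c.d.f. This step is, however, already contained in Proposition~\ref{prop:conditional_coverage_beta}, so I would merely cite it.

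For the second claim, I note that $k_\gamma=\lceil (n+1)\gamma\rceil\leq n$ ensures that $1\leq k_\gamma\leq n$. The lower bound holds because $\gamma>0$ forces $(n+1)\gamma>0$, so the ceiling is at least $1$. Hence the hypothesis $1\leq k\leq n$ of Proposition~\ref{prop:conditional_coverage_beta} holds, and the threshold is a genuine finite order statistic rather than $+\infty$. For $\eta\in(0,\gamma)$, the point $t=\gamma-\eta$ lies in $(0,\gamma)\subset[0,1]$. The displayed identity then follows by specializing the first part to $k=k_\gamma$ and $t=\gamma-\eta$.

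There is no real obstacle here. The only point requiring care is this index check: one must make sure $k_\gamma$ falls in the admissible range so that the beta law applies. In the excluded case $k_\gamma=n+1$, the coverage would be identically $1$ and the beta law would not apply.
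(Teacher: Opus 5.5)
Your proposal is correct and matches the paper, which says only that the statement follows immediately from Proposition~\ref{prop:conditional_coverage_beta}, since $C_{n,k}$ and $B_{n,k}$ have the same law. Your extra checks that $1\leq k_\gamma\leq n$ and that $\gamma-\eta\in[0,1]$ are exactly the small details the specialization needs.
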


This follows immediately from Proposition~\ref{prop:conditional_coverage_beta}.
It identifies the beta lower tail as the i.i.d.\ reference probability of a bad calibration event. Later, when the realized coverage law is no longer exactly beta, our Wasserstein bounds will compare its lower tail to this same reference
quantity.

Thus, the classical split conformal guarantee is recovered as the mean of this beta law. The rest of the paper takes this beta distribution as the reference law: departures from the i.i.d.\ setting will be measured by how much the realized
conditional coverage law deviates from this beta benchmark.

\subsection{Optimal Transport and Wasserstein Distance}
\label{subsec:wasserstein_background}

We recall the few facts about Wasserstein distances that will be used below
\citep{ambrosio2005gradient,villani2009optimal,peyre2019computational,santambrogio2015optimal}.
Let \((\mathcal X,d)\) be a metric space. For \(p\geq 1\), the
\(p\)-Wasserstein distance between probability laws \(\mu\) and \(\nu\) is
\[
    W_p(\mu,\nu)
    =
    \left(
    \inf_{\pi\in\Pi(\mu,\nu)}
    \int_{\mathcal X\times\mathcal X}
        d(x,y)^p
    \,d\pi(x,y)
    \right)^{1/p},
\]
where \(\Pi(\mu,\nu)\) is the set of couplings with marginals \(\mu\) and
\(\nu\). On the space of probability laws with finite \(p\)-th moment, \(W_p\)
is a metric. In particular, when \(\mathcal X=[0,1]\), the moment condition is
automatic.

In this paper, the laws compared by Wasserstein distances are laws of calibration-conditional coverage variables, supported on \([0,1]\). In this setting, with \(d(u,v)=|u-v|\), the definition becomes
\[
    W_p(\mu,\nu)
    =
    \left(
    \inf_{\pi\in\Pi(\mu,\nu)}
    \int_{[0,1]^2}
        |u-v|^p
    \,d\pi(u,v)
    \right)^{1/p}.
\]
We first record two elementary consequences of the definition.

\begin{proposition}[Coupling Upper Bound]
\label{prop:w1_coupling_upper_bound}
Fix \(p\geq 1\). Let \(\mu\) and \(\nu\) be probability laws on \([0,1]\). If
\(X\sim\mu\) and \(Y\sim\nu\) are defined on the same probability space, then
\[
    W_p(\mu,\nu)
    \leq
    \left(\E{|X-Y|^p}\right)^{1/p}.
\]
\end{proposition}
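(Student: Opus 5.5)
The argument just reads off the definition: the joint law of \((X,Y)\) is one admissible coupling of \(\mu\) and \(\nu\), so the infimum defining \(W_p\) can only be smaller. Concretely, let \(\pi\) be the law of the random pair \((X,Y)\) on \([0,1]^2\), i.e.\ the pushforward of the underlying probability measure under \(\omega\mapsto (X(\omega),Y(\omega))\). I will first check that \(\pi\in\Pi(\mu,\nu)\). For any Borel set \(A\subseteq[0,1]\), we have \(\pi(A\times[0,1])=\Pr{X\in A}=\mu(A)\). Likewise \(\pi([0,1]\times A)=\Pr{Y\in A}=\nu(A)\). Hence \(\pi\) has the required marginals.

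Next, by the change-of-variables formula for pushforward measures,
\[
    \int_{[0,1]^2}|u-v|^p\,d\pi(u,v)=\E{|X-Y|^p}.
\]
This quantity is finite, since \(|X-Y|\leq 1\) on \([0,1]\). Because \(\pi\) is one element of \(\Pi(\mu,\nu)\), the infimum over all couplings is at most this value:
\[
    W_p(\mu,\nu)^p\leq \E{|X-Y|^p}.
\]
Finally, \(t\mapsto t^{1/p}\) is nondecreasing on \([0,\infty)\), so taking \(p\)-th roots gives the claim.

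There is no real obstacle here. The only point needing care is measurability: \((X,Y)\) must be a random vector, so that its joint law is a well-defined Borel probability measure on \([0,1]^2\). This holds automatically because \(X\) and \(Y\) are random variables on a common probability space, and the product Borel \(\sigma\)-algebra on \([0,1]^2\) coincides with the Borel \(\sigma\)-algebra of \([0,1]^2\).
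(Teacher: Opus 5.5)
Your proposal is correct and takes the same approach as the paper, which records this result as an elementary consequence of the definition of \(W_p\) and gives no separate proof: the joint law of \((X,Y)\) lies in \(\Pi(\mu,\nu)\), so the infimum is at most \(\E{|X-Y|^p}\). Your checks of the marginals, the change-of-variables identity, and the monotonicity of \(t\mapsto t^{1/p}\) are exactly the details that make this argument rigorous.
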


This bound is our basic way of turning a concrete coupling into a Wasserstein
estimate. The next fact explains how estimates at different Wasserstein orders
are related.

\begin{proposition}[Monotonicity in \(p\)]
\label{prop:wasserstein_monotonicity}
Let \(\mu\) and \(\nu\) be probability laws on \([0,1]\). If
\(1\leq p\leq q\), then
\[
    W_p(\mu,\nu)
    \leq
    W_q(\mu,\nu).
\]
\end{proposition}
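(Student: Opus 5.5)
The plan is to work at the level of couplings and use Jensen's inequality, or equivalently Lyapunov's inequality for \(L^p\) norms. Fix \(1\leq p\leq q\).

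First, take an arbitrary coupling \(\pi\in\Pi(\mu,\nu)\) and let \((X,Y)\sim\pi\). Since \(q/p\geq 1\), the map \(x\mapsto x^{q/p}\) is convex on \([0,\infty)\). Applying Jensen's inequality to the nonnegative variable \(|X-Y|^p\) gives
\[
    \left(\E{|X-Y|^p}\right)^{q/p}
    \leq
    \E{|X-Y|^q},
\]
and therefore \(\left(\E{|X-Y|^p}\right)^{1/p}\leq \left(\E{|X-Y|^q}\right)^{1/q}\). Both expectations are finite, because \(|X-Y|\leq 1\) on \([0,1]\).

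Next, I pass to the infimum. By the definition of \(W_p\) (or directly by Proposition~\ref{prop:w1_coupling_upper_bound} applied to \((X,Y)\)), the left-hand side is at least \(W_p(\mu,\nu)\). This yields
\[
    W_p(\mu,\nu)\leq \left(\int |u-v|^q\,d\pi(u,v)\right)^{1/q}
\]
for every \(\pi\in\Pi(\mu,\nu)\). Taking the infimum over \(\pi\), and using that \(t\mapsto t^{1/q}\) is increasing so it commutes with the infimum, gives \(W_p(\mu,\nu)\leq W_q(\mu,\nu)\).

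No step is a genuine obstacle. The only point needing care is that the argument should not assume the infimum is attained. Bounding \(W_p\) by the \(L^p\) cost of each individual coupling before taking the infimum avoids this issue. Alternatively, one could note that optimal couplings exist on the compact space \([0,1]\), but that is not needed.
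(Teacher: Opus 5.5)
Your proof is correct. The paper gives no written proof and simply records this as an elementary consequence of the definition; your argument, applying Jensen's (Lyapunov's) inequality to each coupling and then taking the infimum, is the standard argument that remark refers to. Pulling \(t\mapsto t^{1/q}\) through the infimum needs its continuity as well as its monotonicity, which of course holds here.
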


This monotonicity lets us state some intermediate estimates in the
Wasserstein order that is most convenient and then read them on the \(W_1\)
scale used for coverage.

For estimates involving expectations, we will also use the following dual form
of \(W_1\).

\begin{proposition}[Dual Representation and Mean Bound]
\label{prop:w1_dual_mean_bound}
If $\mu$ and $\nu$ are probability measures on $[0,1]$, then
\[
    W_1(\mu,\nu)
    =
    \sup_{\|f\|_{\mathrm{Lip}}\leq 1}
    \left|
        \int f(u)\,d\mu(u)
        -
        \int f(u)\,d\nu(u)
    \right|.
\]
In particular,
\[
    \left|
        \int u\,d\mu(u)
        -
        \int u\,d\nu(u)
    \right|
    \leq
    W_1(\mu,\nu).
\]
\end{proposition}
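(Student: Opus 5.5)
The statement to prove is Proposition~\ref{prop:w1_dual_mean_bound}: the Kantorovich--Rubinstein dual representation of \(W_1\) on \([0,1]\), together with the resulting mean bound. The plan treats the two inequalities in the identity separately and then specializes to the identity map.

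\textbf{Easy direction.} For this direction I will only need couplings. Fix any \(f\) with \(\|f\|_{\mathrm{Lip}}\leq 1\) and any coupling \(\pi\in\Pi(\mu,\nu)\). Then
\[
\left|\int f\,d\mu-\int f\,d\nu\right|
=\left|\int (f(u)-f(v))\,d\pi(u,v)\right|
\leq\int |u-v|\,d\pi(u,v).
\]
Taking the infimum over \(\pi\) and the supremum over \(f\) gives \(\sup\leq W_1(\mu,\nu)\). Here \(f\) is bounded on \([0,1]\), so all integrals are finite.

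\textbf{Reverse direction.} This is the main obstacle, and I would use the one-dimensional structure rather than general duality. Write \(F_\mu,F_\nu\) for the CDFs and \(F_\mu^{-1},F_\nu^{-1}\) for the quantile functions.
\begin{itemize}
\item First, the quantile coupling \((F_\mu^{-1}(U),F_\nu^{-1}(U))\) with \(U\) uniform is admissible. By Proposition~\ref{prop:w1_coupling_upper_bound} and the layer-cake formula,
\[
W_1(\mu,\nu)\leq \int_0^1|F_\mu^{-1}(s)-F_\nu^{-1}(s)|\,ds=\int_0^1|F_\mu(t)-F_\nu(t)|\,dt.
\]
The last equality holds because both integrals equal the area between the graphs of the CDF and the quantile function.
\item Second, I exhibit a test function attaining this value. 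Define \(f(x)=\int_0^x \operatorname{sign}(F_\nu(t)-F_\mu(t))\,dt\), which is \(1\)-Lipschitz.
\item Third, integration by parts on \([0,1]\) gives \(\int f\,d\mu=f(1)-\int_0^1 f'(t)F_\mu(t)\,dt\). Subtracting the same identity for \(\nu\) yields
\[
\int f\,d\mu-\int f\,d\nu=\int_0^1 f'(t)\,(F_\nu(t)-F_\mu(t))\,dt=\int_0^1|F_\mu-F_\nu|\,dt\geq W_1(\mu,\nu).
\]
\end{itemize}
The delicate points are justifying integration by parts against a measure with atoms and verifying the layer-cake identity. I would handle both by writing \(f(x)=\int \mathbf 1\{t<x\}f'(t)\,dt\) and applying Fubini, which avoids boundary-term issues since \(F(1)=1\).

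\textbf{Mean bound.} Combining the two directions gives the equality. The mean bound then follows by taking \(f(u)=u\), which is \(1\)-Lipschitz, and bounding it by the supremum.
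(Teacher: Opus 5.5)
Your proof is correct, but it takes a different route from the paper. The paper does not prove the identity itself. It cites the general Kantorovich--Rubinstein duality from Santambrogio's book, which holds on arbitrary metric spaces, and then plugs in the $1$-Lipschitz map $u\mapsto u$.

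You instead prove the duality from scratch using the order structure of $[0,1]$, in three steps:
\begin{itemize}
\item the coupling argument gives $\sup\le W_1$;
\item the quantile coupling gives $W_1\le\int_0^1|F_\mu-F_\nu|\,dt$;
\item the explicit potential $f(x)=\int_0^x\operatorname{sign}(F_\nu(t)-F_\mu(t))\,dt$, with the Fubini identity $\int f\,d\mu=\int_0^1 f'(t)\,(1-F_\mu(t))\,dt$, closes the chain.
\end{itemize}

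Because the chain closes, you get more than the statement. Equality holds throughout, so the supremum is attained by an explicit potential. Proposition~\ref{prop:w1_one_dimensional_formula} is proved along the way, and the quantile coupling is shown to be $W_1$-optimal.

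The price is that your argument is intrinsically one-dimensional. It needs CDFs and the equivalence $F^{-1}(s)\le t\iff s\le F(t)$, which is what your ``area'' identity actually rests on. State that step precisely: the region is the one between the graphs of $F_\mu$ and $F_\nu$, which is the same planar set as the region between the graphs of $F_\mu^{-1}$ and $F_\nu^{-1}$. The ``CDF and quantile function'' phrasing is imprecise, though the Fubini argument you sketch is right. The cited theorem, by contrast, holds in the general metric-space setting in which the paper introduces $W_p$.

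Finally, your decomposition makes visible that the mean bound needs only your easy direction. The same holds for everything the paper uses this proposition for, such as Theorem~\ref{thm:w1_coverage_gap}. For any coupling, $|\mathbb E[X]-\mathbb E[Y]|\le\mathbb E|X-Y|$, so no duality is required there.
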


The first identity in Proposition~\ref{prop:w1_dual_mean_bound} is the
Kantorovich--Rubinstein duality
\citep[Eq.~(3.1)]{santambrogio2015optimal}. The second follows by taking the
identity map \(u\mapsto u\), which is \(1\)-Lipschitz on \([0,1]\).
Together with Proposition~\ref{prop:wasserstein_monotonicity}, this means that
any \(W_p\) bound with \(p\geq 1\) can be converted into the corresponding
mean bound through \(W_1\) when needed.

\begin{proposition}[Optimality of Monotone Transport]
\label{prop:monotone_transport}
Fix \(p\geq 1\). Let \(\mu\) and \(\nu\) be probability laws on \(\mathbb R\)
with finite \(p\)-th moments. Assume that \(F_\mu\) is continuous, and let
\(F_\nu^{-1}\) denote the generalized inverse of \(F_\nu\). Define
\[
    \Tmon(x)
    =
    F_\nu^{-1}(F_\mu(x)).
\]
If \(X\sim\mu\), then \(\Tmon(X)\sim\nu\), and the coupling
\((X,\Tmon(X))\) is optimal. Consequently,
\[
    W_p^p(\mu,\nu)
    =
    \E{\left|X-\Tmon(X)\right|^p}.
\]
\end{proposition}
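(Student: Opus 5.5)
The plan is to prove the statement in two parts. First, \(\Tmon(X)\sim\nu\). Second, the coupling \((X,\Tmon(X))\) is optimal, so that its cost equals \(W_p^p(\mu,\nu)\).

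For the pushforward claim, the key input is the probability integral transform. Since \(F_\mu\) is continuous and \(X\sim\mu\), the variable \(U:=F_\mu(X)\) is uniform on \([0,1]\). This holds because \(\Pr{F_\mu(X)\leq u}=u\) for every \(u\in(0,1)\): take \(x_u=\sup\{x:F_\mu(x)\leq u\}\) and use continuity to get \(F_\mu(x_u)=u\). Next, \(F_\nu^{-1}(U)\sim\nu\) by the standard quantile-transform identity
\[
    F_\nu^{-1}(u)\leq y \iff u\leq F_\nu(y),
\]
which follows from right-continuity of \(F_\nu\) and the definition \(F_\nu^{-1}(u)=\inf\{y:F_\nu(y)\geq u\}\). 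This gives \(\Pr{F_\nu^{-1}(U)\leq y}=F_\nu(y)\). The boundary values \(U\in\{0,1\}\) occur with probability zero and can be ignored. Along the way I will also record \(F_\mu^{-1}(U)=X\) almost surely. With this, the coupling \((X,\Tmon(X))\) has the same law as the comonotone coupling \((F_\mu^{-1}(U),F_\nu^{-1}(U))\) for \(U\) uniform.

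For optimality, I will show that the comonotone (quantile) coupling minimizes \(\E{c(X-Y)}\) over all couplings, where \(c(z)=|z|^p\) is convex. The approach is the classical cyclical-monotonicity and submodularity argument. For convex \(c\), the cost \((x,y)\mapsto c(x-y)\) satisfies, for \(x\leq x'\) and \(y\leq y'\),
\[
    c(x-y)+c(x'-y')\leq c(x-y')+c(x'-y).
\]
This is a direct consequence of convexity, since \(x-y'\) and \(x'-y\) bracket \(x-y\) and \(x'-y'\), and the two pairs have the same sum.

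From the inequality, I plan to conclude as follows. An optimal coupling \(\pi^\ast\) exists, since the cost is lower semicontinuous, the marginals are fixed, and the \(p\)-th moments are finite. Its support is \(c\)-cyclically monotone, and by the inequality above every such support is a monotone (nondecreasing) set. A coupling supported on a monotone set with marginals \(\mu\) and \(\nu\) must be the comonotone coupling, because its joint c.d.f. equals \(\min(F_\mu(x),F_\nu(y))\). Hence \(\pi^\ast\) is the law of \((F_\mu^{-1}(U),F_\nu^{-1}(U))\), which is the law of \((X,\Tmon(X))\), and the identity for \(W_p^p\) follows.

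The main obstacle is this optimality step. Specifically, it is the passage from "optimal plans have \(c\)-cyclically monotone support" to "the plan is comonotone." That passage requires either citing the support theorem from \citep{villani2009optimal,santambrogio2015optimal} or giving a direct proof. For a self-contained alternative, I would first verify the claim for discrete measures with finitely many equal-weight atoms, where the submodularity swap argument is elementary. I would then pass to general \(\mu,\nu\) by approximating in \(W_p\) and using stability of optimal plans together with continuity of quantile functions under weak convergence. Given the paper's expository style, I expect simply to cite \citep[Thm.~2.9]{santambrogio2015optimal} for this step.
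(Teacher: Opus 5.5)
Your pushforward argument is fine, including the a.s.\ identity \(F_\mu^{-1}(F_\mu(X))=X\). Your fallback of citing \citep[Thm.~2.9]{santambrogio2015optimal} is exactly what the paper does; it gives no proof beyond that citation.

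Your self-contained optimality sketch, however, breaks at \(p=1\). The step ``every \(c\)-cyclically monotone support is a nondecreasing set'' needs the \emph{strict} version of your swap inequality, i.e.\ strict convexity of \(c\). For \(p>1\) this works: a pair \(x<x'\), \(y>y'\) in the support makes the swap strictly cheaper, contradicting \(2\)-point cyclical monotonicity. For \(c(z)=|z|\) the swap inequality is only weak, and the conclusion is false. Take \(\mu=\operatorname{Unif}[0,1]\) and \(\nu=\operatorname{Unif}[2,3]\), which are atomless and so satisfy the hypotheses. On the product of the supports \(|x-y|=y-x\), so every coupling costs \(2\). In particular the antitone coupling \((X,3-X)\) is optimal, and its support is \(c\)-cyclically monotone but decreasing. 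So ``hence \(\pi^\ast\) is the comonotone coupling'' fails. This matters because \(p=1\) is precisely the case the paper relies on: Theorem~\ref{thm:bad_calibration} is applied with \(p=1\), and the half-normal example's sharpness identity is a \(W_1\) statement.

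The statement asserts optimality, not uniqueness, so for \(p=1\) you should not try to identify \(\pi^\ast\) at all. The most direct route is the layer-cake identity \(|x-y|=\int_{\mathbb R}\bigl(\mathbf{1}\{x\le t<y\}+\mathbf{1}\{y\le t<x\}\bigr)\,dt\). For any coupling it gives
\[
    \E{|X-Y|}
    =
    \int_{\mathbb R}\bigl(\Pr{X\le t<Y}+\Pr{Y\le t<X}\bigr)\,dt
    \geq
    \int_{\mathbb R}\left|F_\mu(t)-F_\nu(t)\right|dt,
\]
because \(F_\mu(t)-F_\nu(t)=\Pr{X\le t<Y}-\Pr{Y\le t<X}\). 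For the quantile coupling \((F_\mu^{-1}(U),F_\nu^{-1}(U))\), one of the two events is empty for each \(t\), so equality holds. As a by-product this also gives the one-dimensional formula of Proposition~\ref{prop:w1_one_dimensional_formula}.

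Two other fixes also work:
\begin{enumerate}
\item For compactly supported \(\mu,\nu\), apply your strict-convexity argument to \(|z|+\varepsilon z^2\), let \(\varepsilon\to0\), and then remove compactness by approximation.
\item Use your discrete route. Uncrossing needs only the weak swap inequality, so it proves optimality of the sorted matching for \(p=1\) as well.
\end{enumerate}
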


This is the one-dimensional monotone rearrangement theorem
\citep[Thm.~2.9]{santambrogio2015optimal}. It gives an explicit optimal map,
which will be useful when a distorted realized-coverage law can be transported
back to its beta reference.

\begin{proposition}[One-Dimensional Formula]
\label{prop:w1_one_dimensional_formula}
Let \(\mu\) and \(\nu\) be probability laws on \([0,1]\), with distribution
functions \(F_\mu\) and \(F_\nu\). Then
\[
    W_1(\mu,\nu)
    =
    \int_0^1
    \left|
        F_\mu(t)-F_\nu(t)
    \right|
    dt.
\]
\end{proposition}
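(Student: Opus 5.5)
The plan is to derive the one-dimensional formula from the monotone-transport result, Proposition~\ref{prop:monotone_transport}, via the quantile coupling, followed by a change of variables from quantile levels to the real line. Let \(U\sim\operatorname{Unif}(0,1)\) and set \(X=F_\mu^{-1}(U)\), \(Y=F_\nu^{-1}(U)\), where \(F^{-1}\) denotes the generalized inverse. The standard inverse-transform fact gives \(X\sim\mu\) and \(Y\sim\nu\). In dimension one this comonotone coupling is optimal for \(W_1\) without any continuity assumption on \(F_\mu\). Since Proposition~\ref{prop:monotone_transport} as stated requires \(F_\mu\) continuous, I plan to use the following remark to close the gap. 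For the convex cost \(|x-y|\), any coupling can be rearranged into the comonotone one without increasing the cost; this is the classical rearrangement (submodularity) inequality. The cited theorem of \citet{santambrogio2015optimal} in fact states optimality of the quantile coupling in general. Either way, this yields
\[
    W_1(\mu,\nu)=\int_0^1 \left|F_\mu^{-1}(s)-F_\nu^{-1}(s)\right|\,ds .
\]

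The second step converts this quantile integral into the distribution-function integral. The quantity \(\int_0^1|F_\mu^{-1}(s)-F_\nu^{-1}(s)|\,ds\) is the Lebesgue measure of the region in \([0,1]\times[0,1]\) lying between the graphs of the two quantile functions. I will write \(|a-b|=\int_0^1 \mathbf 1\{\min(a,b)\le t<\max(a,b)\}\,dt\) for \(a,b\in[0,1]\) and apply Fubini. For fixed \(t\), the condition \(F_\mu^{-1}(s)\le t<F_\nu^{-1}(s)\) holds if and only if \(F_\nu(t)<s\le F_\mu(t)\), using the Galois relation \(F^{-1}(s)\le t\iff s\le F(t)\). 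The symmetric condition likewise reduces to \(F_\mu(t)<s\le F_\nu(t)\). Integrating over \(s\) then gives exactly \(|F_\mu(t)-F_\nu(t)|\), and integrating over \(t\in[0,1]\) yields the claim. Because both laws are supported on \([0,1]\), both distribution functions vanish for \(t<0\) and equal one for \(t\ge 1\), so the \(t\)-integral is confined to \([0,1]\).

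The main obstacle is the optimality of the quantile coupling when \(F_\mu\) has atoms, since the earlier proposition assumed continuity. If I want to avoid the citation, the fallback is a direct argument that avoids couplings for the lower bound. For every coupling \(\pi\) of \(X\sim\mu\) and \(Y\sim\nu\), the same layer-cake identity gives
\[
    \mathbb E|X-Y|=\int_0^1 \mathbb P\bigl(\min(X,Y)\le t<\max(X,Y)\bigr)\,dt .
\]
For each \(t\), this probability is at least \(|\mathbb P(X\le t)-\mathbb P(Y\le t)|=|F_\mu(t)-F_\nu(t)|\). Hence \(W_1\ge\int_0^1|F_\mu-F_\nu|\,dt\) holds for every coupling. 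The quantile coupling attains equality, because under it the events \(\{X\le t\}\) and \(\{Y\le t\}\) are nested. This argument gives both directions without invoking Proposition~\ref{prop:monotone_transport} at all, and I expect it to be the cleanest route.
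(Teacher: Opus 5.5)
Your proof is correct. The paper gives no proof of this proposition: it is recalled as a standard fact from the optimal-transport references listed at the start of Section~\ref{subsec:wasserstein_background} (e.g.\ \citet{santambrogio2015optimal}).

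On your first route, your caution is justified. Proposition~\ref{prop:monotone_transport} as the paper states it requires $F_\mu$ continuous. However, both sides of the formula are symmetric in $(\mu,\nu)$, so it is enough that one of the two laws be atomless. That already covers every use of the formula in the paper, where one of the two laws is always $\beta_{n,k}$.

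Your fallback is the cleaner argument. The layer-cake identity together with $\mathbb{P}(X\le t<Y)+\mathbb{P}(Y\le t<X)\ge|F_\mu(t)-F_\nu(t)|$ gives the lower bound over all couplings at once. Nestedness of $\{U\le F_\mu(t)\}$ and $\{U\le F_\nu(t)\}$ then gives equality for the quantile coupling. This argument:
\begin{itemize}
\item needs no optimality theorem and handles atoms;
\item yields as a by-product that the quantile coupling is $W_1$-optimal, i.e.\ the $p=1$ case of Proposition~\ref{prop:monotone_transport} without the continuity hypothesis;
\item extends verbatim to laws on $\mathbb{R}$ with finite first moment by integrating $t$ over $\mathbb{R}$.
\end{itemize}
The last point matters because the real-line version is what the paper actually invokes for the Gaussian comparison in the proof of Proposition~\ref{prop:quantile_be_beta_transport}, even though the proposition is stated only on $[0,1]$.
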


Proposition~\ref{prop:w1_one_dimensional_formula} will be useful when the
calibration-conditional coverage law is described through an order statistic or through the
associated counting process.

Although our main objects are supported on \([0,1]\), some asymptotic
calculations below compare Gaussian limits on \(\mathbb R\). We will use the
following closed form for centered normal laws, where \(W_1\) is computed with
the Euclidean distance.

\begin{lemma}[Centered Normal Laws]
\label{lem:w1_centered_normals}
For any \(\sigma_1,\sigma_2\geq 0\),
\[
    W_1\!\left(
        N(0,\sigma_1^2),
        N(0,\sigma_2^2)
    \right)
    =
    \sqrt{\frac{2}{\pi}}
    |\sigma_1-\sigma_2|.
\]
\end{lemma}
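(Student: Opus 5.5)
Without loss of generality take \(\sigma_1\geq\sigma_2\ge0\); the statement is symmetric in the two arguments. The plan is to bound \(W_1\) from above by an explicit coupling and from below by a matching dual test function.

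For the upper bound, let \(Z\sim N(0,1)\) and set \(X=\sigma_1 Z\), \(Y=\sigma_2 Z\), so \(X\sim N(0,\sigma_1^2)\) and \(Y\sim N(0,\sigma_2^2)\). This is the one-dimensional monotone coupling of Proposition~\ref{prop:monotone_transport}, since \(x\mapsto (\sigma_2/\sigma_1)x\) is nondecreasing and pushes the first law onto the second. The case \(\sigma_1=0\) forces both laws to be \(\delta_0\) and is trivial. The coupling gives
\[
    W_1\le \E{|X-Y|}=(\sigma_1-\sigma_2)\,\E{|Z|}=(\sigma_1-\sigma_2)\sqrt{2/\pi},
\]
using the standard computation \(\E{|Z|}=2\int_0^\infty z\varphi(z)\,dz=\sqrt{2/\pi}\). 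The coupling bound is the real-line analogue of Proposition~\ref{prop:w1_coupling_upper_bound}, which holds verbatim on \(\mathbb R\) straight from the definition.

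For the matching lower bound, use Kantorovich--Rubinstein duality, which holds on \(\mathbb R\) for laws with finite first moment, with the \(1\)-Lipschitz test function \(f(u)=|u|\). This yields
\[
    W_1\ge \bigl|\E{|X|}-\E{|Y|}\bigr|=\sqrt{2/\pi}\,|\sigma_1-\sigma_2|.
\]
Equivalently, Proposition~\ref{prop:monotone_transport} already says the monotone coupling is optimal, so equality holds immediately and this second step is only a confirmation.

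The only point needing care is that the earlier propositions are stated on \([0,1]\), so I will note that the definition, the coupling bound and the duality extend to \(\mathbb R\) for Gaussian laws, which have finite moments. There is no genuine obstacle.
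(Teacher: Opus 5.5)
Your proof is correct and complete. The paper gives no proof of this lemma: it is recorded in the background as a standard closed form and used only in the proof of Proposition~\ref{prop:quantile_be_beta_transport}. Your argument via the linear coupling $X=\sigma_1 Z$, $Y=\sigma_2 Z$ is the natural one with the paper's tools, since $x\mapsto(\sigma_2/\sigma_1)x$ is exactly $F_\nu^{-1}\circ F_\mu$.

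Two minor remarks. First, Proposition~\ref{prop:monotone_transport} is already stated on $\mathbb R$ for laws with finite moments. Your normalization $\sigma_1\geq\sigma_2$ with $\sigma_1>0$ gives the continuity of $F_\mu$ that it requires. This includes the degenerate target $\sigma_2=0$, where the generalized inverse gives $T\equiv 0$. So that single step already yields the equality, with no extension from $[0,1]$ needed.

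Second, your lower bound does not need Kantorovich--Rubinstein duality on $\mathbb R$. For any coupling, $\E{|X-Y|}\geq\E{\bigl||X|-|Y|\bigr|}\geq\bigl|\E{|X|}-\E{|Y|}\bigr|$, so only the trivial direction is used. This makes your two-sided argument fully self-contained.
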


\section{A Transportation Framework for Conformal Prediction}\label{sec:method}

In the previous section, we identified \(\beta_{n,k}\) as the law of the
realized coverage induced by the \(k\)-th conformal order statistic in the
continuous i.i.d.\ setting. We now use this beta law as a reference object. A
calibration and test mechanism induces a law for its realized coverage, and the
main question is how much this law differs from the i.i.d.\ beta benchmark.

Fix \(1\leq k\leq n\), and let
\[
    \mathcal C_n=(T_1,\ldots,T_n)
\]
be the calibration scores under a joint law \(P\). Let \(\widetilde T\) denote
the test score.\footnote{Throughout the text, \(S_i\) denotes i.i.d.\ scores,
while \(T_i\) denotes scores that need not be i.i.d.} This notation does not tie
the test score to the next index after calibration; for example, in a
time-series setting one may take \(\widetilde T=T_{n+h}\), with \(h\) fixed or
depending on \(n\). Let the calibration order statistics be
\[
    T_{(1)}\leq\cdots\leq T_{(n)}.
\]
The marginal coverage of the threshold \(T_{(k)}\) is
\[
    \operatorname{Cov}(k)
    :=
    \Pr{\widetilde T\leq T_{(k)}}.
\]
We write it as the expectation of the coverage realized by the particular
calibration sample:
\[
    D_{n,k}
    :=
    \Pr{
        \widetilde T\leq T_{(k)}
        \,\middle|\,
        \mathcal C_n
    }.
\]
Thus
\[
    \operatorname{Cov}(k)=\E{D_{n,k}}.
\]
The law of this random realized coverage is
\[
    \nu_{n,k}:=\mathcal L(D_{n,k}).
\]
In the continuous i.i.d.\ reference case,
Proposition~\ref{prop:conditional_coverage_beta} gives
\(\nu_{n,k}=\beta_{n,k}\). The framework below keeps \(\beta_{n,k}\) fixed as
the benchmark and measures the deformation from \(\beta_{n,k}\) to
\(\nu_{n,k}\) on the coverage scale \([0,1]\). This leads to the following
convenient notation.

\begin{definition}[Wasserstein Beta Neighborhood]
\label{def:wasserstein_beta_neighborhood}
Let \(\mathcal P([0,1])\) denote the set of probability laws on \([0,1]\). For
\(p\geq 1\) and \(\rho\geq 0\), define
\[
    \mathcal B_p(\beta_{n,k},\rho)
    :=
    \left\{
        \nu\in\mathcal P([0,1]):
        W_p(\nu,\beta_{n,k})\leq \rho
    \right\}.
\]
We say that the realized coverage law is in the \(p\)-Wasserstein beta
neighborhood of radius \(\rho\) when
\[
    \nu_{n,k}\in\mathcal B_p(\beta_{n,k},\rho).
\]
\end{definition}

The radius \(\rho\) quantifies how far the realized coverage law is from the
i.i.d.\ beta law. When \(\rho=0\), the realized coverage law is exactly the beta
reference; for positive \(\rho\), it is a transported or perturbed version of
that reference. To make this geometric picture concrete, consider the
contaminated law $\nu_{\pi,c} = (1-\pi)\beta_{n,k} + \pi\delta_c$, which
replaces a fraction $\pi$ of the beta mass by a point mass at coverage value
$c \in [0,1]$. A direct computation via Proposition ~\ref{prop:w1_one_dimensional_formula} gives
\[
    W_1(\nu_{\pi,c}, \beta_{n,k}) 
    = \pi \int_0^1 |\mathbf{1}\{x \geq c\} - F_{\beta}(x)|\,dx,
\]
so the radius grows linearly in $\pi$ at a rate determined by how far $c$
sits from the bulk of $\beta_{n,k}$. Figure~\ref{fig:fig_w1_ball} displays this as a function of $(\pi, c)$, with each contour tracing the boundary of the Wasserstein ball $\mathcal{B}_1(\beta_{n,k}, \rho)$ for a different radius $\rho$. The shape of these contours reflects the geometry of the contamination: for $c$ close to $\gamma$, the point mass lands near the bulk of $\beta_{n,k}$ and transport is cheap, so the contour allows a larger contamination fraction $\pi$ before the radius $\rho$ is exceeded. As $c$ moves away from $\gamma$, the transport cost per unit of $\pi$ increases, and the contour bends inward, permitting only a smaller $\pi$ for the same budget $\rho$. Each contour therefore defines a feasible region in $(\pi, c)$: staying inside it is precisely the condition $\nu_{\pi,c} \in \mathcal{B}_1(\beta_{n,k}, \rho)$.

\begin{figure}[h]
    \centering
    \includegraphics[width=0.7\linewidth]{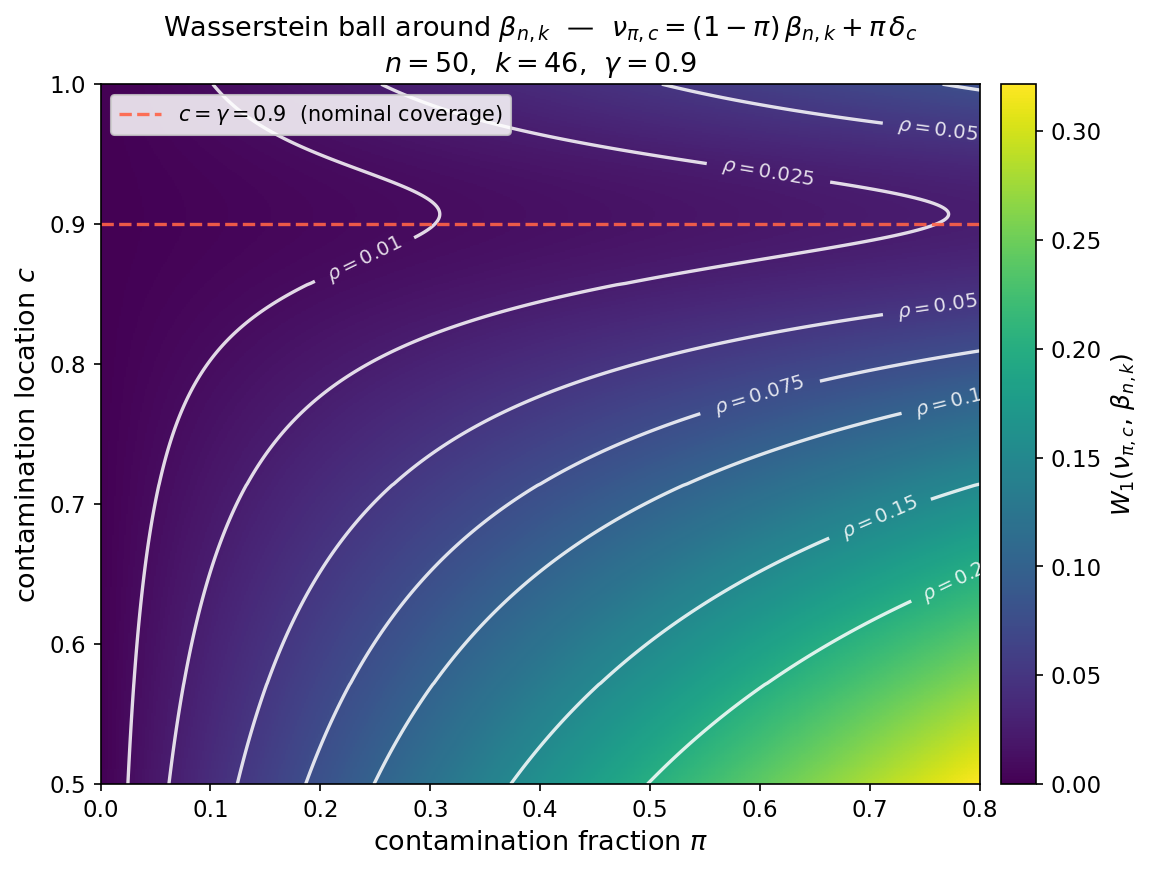}
    \caption{Wasserstein radius $W_1(\nu_{\pi,c}, \beta_{n,k})$ for the 
    contaminated law $\nu_{\pi,c} = (1-\pi)\beta_{n,k} + \pi\delta_c$,
    with $n = 50$, $k = 46$, and $\gamma = 0.9$.
    Each white contour traces the boundary of $\mathcal{B}_1(\beta_{n,k},\rho)$
    for a given $\rho$, defining the set of contaminations $(\pi, c)$ 
    compatible with that transport budget. Contamination near $c = \gamma$ 
    allows a larger fraction $\pi$ for the same radius, while contamination 
    away from $\gamma$ forces $\pi$ to be small — the contour shape 
    directly encodes this trade-off.}
    \label{fig:fig_w1_ball}
\end{figure}

\subsection{Coverage Guarantees}

The first consequence of a Wasserstein comparison is a direct bound on marginal
coverage. This is the basic reason for working on the coverage scale.

\begin{theorem}[Wasserstein Coverage Gap]
\label{thm:w1_coverage_gap}
Fix \(1\leq k\leq n\), and define \(D_{n,k}\), \(\nu_{n,k}\), and
\(\operatorname{Cov}(k)\) as above. Then
\[
    \left|
        \operatorname{Cov}(k)
        -
        \frac{k}{n+1}
    \right|
    \leq
    W_1(\nu_{n,k},\beta_{n,k}).
\]
Consequently, if \(\nu_{n,k}\in\mathcal B_p(\beta_{n,k},\rho)\) for some
\(p\geq 1\), then
\[
    \left|
        \operatorname{Cov}(k)
        -
        \frac{k}{n+1}
    \right|
    \leq
    \rho.
\]
If \(k_\gamma=\lceil(n+1)\gamma\rceil\leq n\), then
\begin{align*}
    \left|
        \operatorname{Cov}(k_\gamma)
        -
        \gamma
    \right|
    &\leq
    \rho+\frac{1}{n+1}.
\end{align*}

\end{theorem}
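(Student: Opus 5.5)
The plan is to obtain all three inequalities from the tower identity \(\operatorname{Cov}(k)=\E{D_{n,k}}\) together with the mean bound in Proposition~\ref{prop:w1_dual_mean_bound}.

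First, I will write both sides of the gap as means of laws on \([0,1]\). Since \(D_{n,k}\) is a conditional probability, it takes values in \([0,1]\), so \(\nu_{n,k}\in\mathcal P([0,1])\). We have
\[
    \operatorname{Cov}(k)=\E{D_{n,k}}=\int u\,d\nu_{n,k}(u).
\]
The mean of \(\operatorname{Beta}(k,n+1-k)\) is \(k/(k+n+1-k)=k/(n+1)\), so
\[
    \frac{k}{n+1}=\int u\,d\beta_{n,k}(u).
\]
The second display of Proposition~\ref{prop:w1_dual_mean_bound}, with \(\mu=\nu_{n,k}\) and \(\nu=\beta_{n,k}\), then gives
\[
    \left|\operatorname{Cov}(k)-\frac{k}{n+1}\right|\leq W_1(\nu_{n,k},\beta_{n,k}).
\]
This uses only measurability and boundedness of \(D_{n,k}\). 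No i.i.d.\ or continuity assumption on \(P\) is needed, because \(\beta_{n,k}\) enters only as a fixed reference law.

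Second, for the neighborhood statement, suppose \(\nu_{n,k}\in\mathcal B_p(\beta_{n,k},\rho)\) with \(p\geq1\). Proposition~\ref{prop:wasserstein_monotonicity} gives \(W_1(\nu_{n,k},\beta_{n,k})\leq W_p(\nu_{n,k},\beta_{n,k})\leq\rho\). Combining this with the first step yields the bound \(\rho\).

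Third, for \(k=k_\gamma\leq n\), I will apply the triangle inequality:
\[
    |\operatorname{Cov}(k_\gamma)-\gamma|\leq\left|\operatorname{Cov}(k_\gamma)-\frac{k_\gamma}{n+1}\right|+\left|\frac{k_\gamma}{n+1}-\gamma\right|.
\]
Since \((n+1)\gamma\leq k_\gamma<(n+1)\gamma+1\), the last term lies in \([0,1/(n+1))\), which gives \(\rho+1/(n+1)\).

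No step is a real obstacle. The only points needing care are that \(D_{n,k}\) is \([0,1]\)-valued, so that the identity map is \(1\)-Lipschitz on the support, and that the mean of \(\beta_{n,k}\) is computed correctly.
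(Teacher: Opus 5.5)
Your proposal is correct and follows the paper's proof essentially step for step: it combines the tower identity $\operatorname{Cov}(k)=\E{D_{n,k}}$ and the beta mean $k/(n+1)$ with the mean bound of Proposition~\ref{prop:w1_dual_mean_bound}, then uses monotonicity in $p$, then the discretization bound $0\leq k_\gamma/(n+1)-\gamma\leq 1/(n+1)$. Your explicit triangle-inequality step for the last claim simply writes out what the paper describes as immediate.
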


The proof is given in Appendix~\ref{app:proof_w1_coverage_gap}. The theorem
says that once the realized coverage law is close to the beta benchmark in
Wasserstein distance, the usual conformal coverage level degrades by at most
the same amount, up to the standard discretization term.

The previous result controls the average of \(D_{n,k}\). We can also ask for
the probability that a realized calibration sample produces unusually low
coverage. A Wasserstein beta neighborhood gives a comparison with the lower tail
of the beta reference.

\begin{theorem}[Bad Calibration Probabilities]
\label{thm:bad_calibration}
Let \(B_{n,k}\sim\beta_{n,k}\). If
\[
    \nu_{n,k}\in\mathcal B_p(\beta_{n,k},\rho)
\]
for some \(p\geq 1\), then, for every \(t\in[0,1]\) and every
\(\varepsilon>0\),
\[
    \Pr{D_{n,k}\leq t}
    \leq
    \Pr{B_{n,k}\leq t+\varepsilon}
    +
    \left(\frac{\rho}{\varepsilon}\right)^p.
\]
\end{theorem}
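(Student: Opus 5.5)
The plan is a coupling argument followed by Markov's inequality. Since \(\nu_{n,k}\) and \(\beta_{n,k}\) are laws on the compact set \([0,1]\), the infimum defining \(W_p(\nu_{n,k},\beta_{n,k})\) is attained: \(\Pi(\nu_{n,k},\beta_{n,k})\) is weakly compact and \(|u-v|^p\) is bounded and continuous. So there is a coupling \((X,Y)\) with \(X\sim\nu_{n,k}\), \(Y\sim\beta_{n,k}\), and \(\E{|X-Y|^p}=W_p^p(\nu_{n,k},\beta_{n,k})\leq\rho^p\).

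Alternatively, if we want to avoid compactness, we can use the monotone coupling of Proposition~\ref{prop:monotone_transport}. Since \(\beta_{n,k}\) has a continuous distribution function, take \(Y\sim\beta_{n,k}\) and \(X=F_{\nu_{n,k}}^{-1}(F_\beta(Y))\); this coupling is optimal. As a third option, for any \(\delta>0\) one can take a coupling with cost at most \((\rho+\delta)^p\) and let \(\delta\downarrow 0\) at the end.

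Because the statement only involves \(\Pr{D_{n,k}\leq t}\), which depends solely on the law \(\nu_{n,k}\), we may replace \(D_{n,k}\) by \(X\) and \(B_{n,k}\) by \(Y\) without loss of generality.

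Next, split the event \(\{X\leq t\}\) according to whether \(Y\) is close to \(X\):
\[
    \{X\leq t\}
    \subseteq
    \{Y\leq t+\varepsilon\}
    \cup
    \{|X-Y|>\varepsilon\}.
\]
Indeed, if \(X\leq t\) and \(|X-Y|\leq\varepsilon\), then \(Y\leq X+\varepsilon\leq t+\varepsilon\). A union bound gives
\[
    \Pr{X\leq t}
    \leq
    \Pr{Y\leq t+\varepsilon}
    +
    \Pr{|X-Y|>\varepsilon}.
\]
Markov's inequality applied to \(|X-Y|^p\) then yields
\[
    \Pr{|X-Y|>\varepsilon}
    \leq
    \frac{\E{|X-Y|^p}}{\varepsilon^p}
    \leq
    \left(\frac{\rho}{\varepsilon}\right)^p.
\]
Substituting back gives the claim. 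In the approximate-coupling version, the bound carries \(((\rho+\delta)/\varepsilon)^p\), and we let \(\delta\to 0\).

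The only point needing care is that the coupling attains, or nearly attains, \(W_p\). Everything else is the elementary inclusion plus Markov. The approximate-coupling limit sidesteps the attainment question entirely, so no real obstacle is expected.
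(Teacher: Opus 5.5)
Your proof is correct and follows essentially the same route as the paper, which couples \(B_{n,k}\) with \(\Tmon(B_{n,k})\), \(\Tmon=F_{\nu_{n,k}}^{-1}\circ F_{\beta_{n,k}}\) (your second option). It then bounds \(\Pr{D_{n,k}\leq t}\) by \(\Pr{B_{n,k}\leq t+\varepsilon}\) plus the probability that the coupling moves mass by more than \(\varepsilon\), and finishes with Markov's inequality and the optimality of the coupling; your compactness and approximate-coupling variants are equally valid and just avoid invoking the one-dimensional monotone-transport theorem.
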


The proof is given in Appendix~\ref{app:proof_bad_calibration}. The beta term is
the probability of a bad calibration event in the i.i.d.\ reference model; the
additional term accounts for the transport needed to move the beta law to the
realized coverage law. The Markov penalty is useful when only an average
transport radius is available, but it can be conservative, especially on the
\(W_1\) scale.

When the deformation from the beta law can be bounded uniformly, the bad
calibration comparison takes a sharper form. Let
\[
    \Tmon
    =
    F_{\nu_{n,k}}^{-1}\circ F_{\beta_{n,k}}
\]
be the monotone transport map from \(\beta_{n,k}\) to \(\nu_{n,k}\), and let
\(B_{n,k}\sim\beta_{n,k}\). If
\[
    |\Tmon(B_{n,k})-B_{n,k}|
    \leq
    \rho
    \qquad \text{a.s.},
\]
then for every \(t\in[0,1]\),
\[
    \Pr{D_{n,k}\leq t}
    \leq
    \Pr{B_{n,k}\leq t+\rho}.
\]
Indeed, under this coupling \(D_{n,k}\stackrel d=\Tmon(B_{n,k})\), and
\(\Tmon(B_{n,k})\leq t\) implies \(B_{n,k}\leq t+\rho\).

Thus a uniform transport bound shifts the beta lower tail without the Markov
penalty appearing in Theorem~\ref{thm:bad_calibration}. This is the cleanest
version of the transported-beta picture, and we use it whenever an explicit
transport map is available.

\subsection{Decoupled Test Scores and Transported Beta Laws}

A particularly useful simplification occurs when the test score is decoupled
from the calibration sample. This assumption is natural in independent holdout
evaluation and in distribution-shift settings with separate calibration and test
batches. It is also a useful reduction for temporally dependent data when the
test point is sufficiently separated from the calibration block and the
dependence decays with the lag.

\begin{proposition}[Decoupled Test Scores]
\label{prop:decoupled_test_scores}
Assume that \(\widetilde T\) is independent of \(\mathcal C_n\), and let
\(F_{\widetilde T}\) be its distribution function. Then
\[
    D_{n,k}
    =
    F_{\widetilde T}(T_{(k)}),
    \qquad
    \nu_{n,k}
    =
    \mathcal L\!\left(F_{\widetilde T}(T_{(k)})\right).
\]
Suppose that the calibration scores have a common continuous marginal
distribution function \(F_T\), and let \(F_T^{-1}\) denote its generalized
inverse. Define
\[
    U_i:=F_T(T_i),
    \qquad
    i=1,\ldots,n.
\]
Then the variables \(U_i\) are marginally uniform on \([0,1]\), possibly
dependent, and, for
$
    h:=F_{\widetilde T}\circ F_T^{-1},
$
we have
\[
    D_{n,k}\stackrel d=h(U_{(k)}),
    \qquad
    \nu_{n,k}=h_{\#}\mathcal L(U_{(k)}).
\]
\end{proposition}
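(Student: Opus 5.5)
The proof has three steps, in order: compute the conditional probability by independence, establish uniformity of the \(U_i\) by the probability integral transform, and relate \(T_{(k)}\) to \(U_{(k)}\) almost surely.

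\emph{Step 1: the first identity.} Since \(T_{(k)}\) is a measurable function of \(\mathcal C_n\) and \(\widetilde T\) is independent of \(\mathcal C_n\), the freezing (substitution) lemma for conditional expectations gives
\[
    \Pr{\widetilde T\leq T_{(k)}\,\middle|\,\mathcal C_n}
    =
    g(T_{(k)}),
    \qquad
    g(x):=\Pr{\widetilde T\leq x}=F_{\widetilde T}(x),
\]
almost surely. Hence \(D_{n,k}=F_{\widetilde T}(T_{(k)})\), and \(\nu_{n,k}\) is the law of this variable by definition.

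\emph{Step 2: uniformity of the \(U_i\).} Each \(T_i\) has the continuous distribution function \(F_T\), so by the probability integral transform \(U_i=F_T(T_i)\) is uniform on \([0,1]\). Nothing is claimed about the joint law, which keeps whatever dependence the \(T_i\) have.

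\emph{Step 3: relating \(T_{(k)}\) to \(U_{(k)}\).} Because \(F_T\) is nondecreasing, applying it to the sorted sample preserves the order, so \(U_{(k)}=F_T(T_{(k)})\) surely. It then suffices to show that, almost surely, \(F_T^{-1}(F_T(T_i))=T_i\) for every \(i\). Granting this, \(F_T^{-1}(U_{(k)})=T_{(k)}\) almost surely, and therefore
\[
    h(U_{(k)})
    =
    F_{\widetilde T}\bigl(F_T^{-1}(U_{(k)})\bigr)
    =
    F_{\widetilde T}(T_{(k)})
    =
    D_{n,k}
\]
almost surely. This gives the distributional identity, and the pushforward statement \(\nu_{n,k}=h_{\#}\mathcal L(U_{(k)})\) is just a restatement of it.

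The main obstacle is the almost-sure identity \(F_T^{-1}(F_T(x))=x\). With the left-continuous generalized inverse \(F_T^{-1}(u)=\inf\{x:F_T(x)\geq u\}\), we have \(F_T^{-1}(F_T(x))\leq x\) always. Equality can fail only when \(x\) lies in a flat stretch of \(F_T\) that is not its left endpoint, that is, in a set of the form \((a,b]\) with \(F_T(a)=F_T(b)\). There are at most countably many maximal flat intervals, and each carries \(F_T\)-mass zero. So the exceptional set is \(F_T\)-null, and since each \(T_i\sim F_T\), a union bound over \(i=1,\dots,n\) shows the identity holds simultaneously for all \(i\) almost surely. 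Everything else in the argument is routine bookkeeping.
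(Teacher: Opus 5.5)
Your proof is correct and follows the same route as the paper's proof: freeze \(T_{(k)}\) using independence, apply the probability integral transform, and use monotonicity of \(F_T\) to get \(U_{(k)}=F_T(T_{(k)})\). The one difference is that the paper passes over the step \(F_T^{-1}(F_T(T_i))=T_i\) almost surely with ``the generalized inverse convention,'' whereas you verify it via the null flat stretches of \(F_T\), and this actually yields \(D_{n,k}=h(U_{(k)})\) almost surely rather than only in distribution.
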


The proof is given in Appendix~\ref{app:proof_decoupled_test_scores}. In the
no-shift case \(F_{\widetilde T}=F_T\), the map \(h\) is the identity and
\(\nu_{n,k}=\mathcal L(U_{(k)})\), so Theorem~\ref{thm:w1_coverage_gap}
reduces coverage analysis to
\[
    W_1\!\left(
        \mathcal L(U_{(k)}),
        \beta_{n,k}
    \right).
\]
The following proposition records the counting-process representation of this
distance.

\begin{proposition}[Counting Formula]
\label{prop:dependent_calibration_counting}
Let \(U_1,\ldots,U_n\) be random variables on \([0,1]\). Assume each \(U_i\) is uniform; independence is not required. Let
\[
    F_n(t)
    :=
    \frac1n\sum_{i=1}^n \mathbf{1}\{U_i\leq t\},
    \qquad
    N_n(t):=nF_n(t).
\]
Then, for every \(t\in[0,1]\) and \(1\leq k\leq n\),
\[
    \{U_{(k)}\leq t\}
    =
    \left\{
        N_n(t)
        \geq
        k
    \right\}.
\]
Consequently, if \(Z_{n,t}\sim\operatorname{Bin}(n,t)\), then
\[
    W_1\!\left(
        \mathcal L(U_{(k)}),
        \beta_{n,k}
    \right)
    =
    \int_0^1
    \left|
        \Pr{N_n(t)\geq k}
        -
        \Pr{Z_{n,t}\geq k}
    \right|
    \,dt.
\]
\end{proposition}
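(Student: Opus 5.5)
The plan has two steps: first prove the event identity pointwise, then feed it into the one-dimensional $W_1$ formula of Proposition~\ref{prop:w1_one_dimensional_formula}, identifying the beta distribution function through the i.i.d.\ specialization of the same identity.

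\textbf{Event identity.} Fix $t\in[0,1]$ and $1\le k\le n$, and work realization by realization with no probabilistic input. If $U_{(k)}\le t$, then $U_{(1)},\ldots,U_{(k)}$ are all $\le t$. These order statistics are values taken by at least $k$ distinct indices $i$ (count with multiplicity, assigning tied values to distinct indices). Hence $N_n(t)\ge k$.

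Conversely, suppose $N_n(t)\ge k$, so at least $k$ of the $U_i$ lie in $[0,t]$. The sorted vector then has at least $k$ entries $\le t$. Since sorting places these first, $U_{(k)}\le t$.

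Because this argument is purely deterministic, ties and dependence cause no difficulty. Taking probabilities gives
\[
    F_{U_{(k)}}(t)=\Pr{N_n(t)\ge k}.
\]

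\textbf{Reference law.} Next identify the distribution function of $\beta_{n,k}$. Apply the same identity to i.i.d.\ uniforms $V_1,\ldots,V_n$. Then $\sum_i \mathbf 1\{V_i\le t\}\sim\operatorname{Bin}(n,t)$, so
\[
    \Pr{V_{(k)}\le t}=\Pr{Z_{n,t}\ge k}.
\]
By Proposition~\ref{prop:conditional_coverage_beta} with $F_S$ the uniform distribution function, $V_{(k)}=F_S(V_{(k)})\sim\beta_{n,k}$. Therefore $F_{\beta_{n,k}}(t)=\Pr{Z_{n,t}\ge k}$ for all $t\in[0,1]$. Alternatively, this is the classical beta–binomial identity, which could be cited directly.

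\textbf{Conclusion.} Both $\mathcal L(U_{(k)})$ and $\beta_{n,k}$ are probability laws on $[0,1]$, since each $U_i$ takes values in $[0,1]$. Proposition~\ref{prop:w1_one_dimensional_formula} then gives
\[
    W_1\bigl(\mathcal L(U_{(k)}),\beta_{n,k}\bigr)=\int_0^1\bigl|F_{U_{(k)}}(t)-F_{\beta_{n,k}}(t)\bigr|\,dt.
\]
Substituting the two expressions for the distribution functions yields the claimed formula.

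The marginal uniformity assumption is not needed for the identity itself. It serves only to place the formula in context: it makes $\mathbb E[N_n(t)]=nt$, which matches the binomial mean.

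The only potentially delicate point is the tie and multiplicity convention in the event identity. Handling it by counting indices rather than distinct values resolves it, so I do not expect a real obstacle.
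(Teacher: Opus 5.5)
Your proof is correct and follows the same route as the paper: the deterministic event identity, identification of the $\beta_{n,k}$ distribution function with the binomial tail via the i.i.d.\ uniform specialization, and then Proposition~\ref{prop:w1_one_dimensional_formula}. Your version spells out the tie handling and the beta--binomial step, which the paper's proof states in one line each.
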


The proof is given in Appendix~\ref{app:proof_dependent_calibration_counting}.
Thus the Wasserstein distance to the beta law is exactly the integrated
difference between the dependent empirical-count tail and the binomial tail.

\subsection{Berry--Esseen Transport for Dependent Calibration}
\label{subsec:berry_esseen_transport_dependent_calibration}

In many dependent calibration settings, the law of \(U_{(k)}\) is not exactly
beta, but it admits a Berry--Esseen approximation. This is enough to control its
Wasserstein distance to the beta reference.

Let \(U_1,\ldots,U_n\) be marginally uniform random variables on \([0,1]\), not
necessarily independent, and fix \(\gamma\in(0,1)\). 
Here \(\Phi\) denotes the standard normal distribution function. Suppose that,
for some \(\tau_\gamma>0\) and some deterministic sequence \(a_n\to0\),
\[
    \sup_{x\in\mathbb R}
    \left|
        \Pr{
            \sqrt n\{U_{(k_\gamma)}-\gamma\}\leq x
        }
        -
        \Phi\!\left(\frac{x}{\tau_\gamma}\right)
    \right|
    \leq
    a_n.
\]
This assumption says that the calibration order statistic has a Berry--Esseen
law with asymptotic standard deviation \(\tau_\gamma\). The i.i.d.\ beta
reference has the same form with standard deviation \(\sqrt{\gamma(1-\gamma)}\).

\begin{proposition}[Quantile-to-Beta Transport]
\label{prop:quantile_be_beta_transport}
If the preceding Berry--Esseen bound holds, then
\[
    W_1\!\left(
        \mathcal L(U_{(k_\gamma)}),
        \beta_{n,k_\gamma}
    \right)
    \leq
    a_n
    +
    \sqrt{\frac{2}{\pi n}}
    \left|
        \tau_\gamma-\sqrt{\gamma(1-\gamma)}
    \right|
    +
    O(n^{-1/2}).
\]
In particular, if \(a_n=O(n^{-1/2})\), then
\[
    W_1\!\left(
        \mathcal L(U_{(k_\gamma)}),
        \beta_{n,k_\gamma}
    \right)
    \leq
    \sqrt{\frac{2}{\pi n}}
    \left|
        \tau_\gamma-\sqrt{\gamma(1-\gamma)}
    \right|
    +
    O(n^{-1/2}).
\]
\end{proposition}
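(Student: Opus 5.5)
Our plan is a triangle inequality on the coverage scale with two Gaussian intermediate laws. Write \(k=k_\gamma\), \(\sigma_\gamma:=\sqrt{\gamma(1-\gamma)}\), \(G_\tau:=N(\gamma,\tau^2/n)\), and \(\mu_n:=\mathcal L(U_{(k)})\). Then
\[
    W_1(\mu_n,\beta_{n,k})
    \leq
    W_1(\mu_n,G_{\tau_\gamma})
    +
    W_1(G_{\tau_\gamma},G_{\sigma_\gamma})
    +
    W_1(G_{\sigma_\gamma},\beta_{n,k}).
\]
Here \(W_1\) is computed on \(\mathbb R\), where Gaussian laws live. For laws supported on \([0,1]\) this agrees with the definition on \([0,1]\). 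The middle term is exact by translation invariance and scaling of \(W_1\) together with Lemma~\ref{lem:w1_centered_normals}: it equals \(\sqrt{2/(\pi n)}\,|\tau_\gamma-\sigma_\gamma|\), which is precisely the leading term in the statement.

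For the first term we use the one-dimensional formula, Proposition~\ref{prop:w1_one_dimensional_formula}, which also holds on \(\mathbb R\) as \(\int|F_\mu-F_\nu|\). The Berry--Esseen hypothesis gives, after the change of variables \(t=\gamma+x/\sqrt n\),
\[
    \sup_t\bigl|\Pr{U_{(k)}\leq t}-\Phi(\sqrt n(t-\gamma)/\tau_\gamma)\bigr|\leq a_n .
\]
On \([0,1]\) the integrand is therefore at most \(a_n\), so that region contributes at most \(a_n\). Outside \([0,1]\) the law \(\mu_n\) has distribution function \(0\) or \(1\). The remaining contribution is then the Gaussian tail mass outside \([0,1]\), namely \(\int_{-\infty}^0\Phi(\sqrt n(t-\gamma)/\tau_\gamma)\,dt\) plus the symmetric upper piece. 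Since \(\gamma\in(0,1)\) is fixed, this is exponentially small in \(n\), hence \(O(n^{-1/2})\).

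For the third term, the beta law is itself an i.i.d.\ instance of the hypothesis, by Proposition~\ref{prop:conditional_coverage_beta}. The classical Berry--Esseen bound for binomial tails gives a uniform Kolmogorov distance \(O(n^{-1/2})\) between \(\beta_{n,k}\) and \(G_{\sigma_\gamma}\). The cleanest route is the counting identity of Proposition~\ref{prop:dependent_calibration_counting}, \(\Pr{B_{n,k}\leq t}=\Pr{Z_{n,t}\geq k}\), combined with Berry--Esseen for \(Z_{n,t}\) and the fact that \(k/n-\gamma=O(1/n)\). The same truncation argument outside \([0,1]\) then turns this Kolmogorov bound into \(W_1=O(n^{-1/2})\).

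The main obstacle is this third step. The binomial Berry--Esseen constant degrades like \(1/\sqrt{t(1-t)}\) as \(t\to 0,1\), and the variance \(t(1-t)\) differs from \(\gamma(1-\gamma)\) away from \(\gamma\). We would handle this by splitting \([0,1]\) into the window \(|t-\gamma|\leq\delta\), with \(\delta\) fixed and small, and its complement:
\begin{itemize}
    \item On the window, linearize \(\sqrt n(k/n-t)/\sqrt{t(1-t)}\) around \(\gamma\). This matches \(\Phi(\sqrt n(t-\gamma)/\sigma_\gamma)\) up to \(O(n^{-1/2})\) uniformly, with the error measured in Kolmogorov distance and integrated over an interval of bounded length.
    \item On the complement, both tails are exponentially small by Hoeffding's inequality and Gaussian tail bounds.
\end{itemize}
Alternatively, one can simply cite the known \(W_1\) normal approximation for central order statistics. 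Summing the three terms yields the first display. The second display follows by absorbing \(a_n=O(n^{-1/2})\) into the remainder.
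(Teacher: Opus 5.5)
Your proposal is correct and follows essentially the same route as the paper. The paper also inserts Gaussian comparisons with scales $\tau_\gamma$ and $\sqrt{\gamma(1-\gamma)}$, bounds the Gaussian-to-Gaussian term through Lemma~\ref{lem:w1_centered_normals}, and controls the beta term with the i.i.d.\ quantile Berry--Esseen bound. The differences are cosmetic:
\begin{itemize}
\item The paper applies the triangle inequality pointwise inside $\int_0^1|F_{\mu_n}-F_{\beta_{n,k}}|\,dt$, not at the level of $W_1$ on $\mathbb R$. It then extends the Gaussian integral to $\mathbb R$, which gives an upper bound where you have an equality. This also spares it your exponentially small tail pieces outside $[0,1]$.
\item The paper simply cites the classical Berry--Esseen bound for the i.i.d.\ uniform order statistic, whereas you sketch a valid derivation of it from the binomial counting identity.
\end{itemize}
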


The proof is given in
Appendix~\ref{app:proof_quantile_be_beta_transport}.

In the next section, we apply this framework in several concrete settings. The
examples differ in how the beta law is deformed, but the logic is the same:
identify the realized coverage law, compare it with \(\beta_{n,k}\), and
translate that comparison into a coverage statement.

\section{Applications}

\subsection{Exact Transported Beta Laws under Distribution Shift}

We first consider a decoupled distribution-shift setting. The calibration scores are i.i.d.\ from a continuous distribution function \(F_T\), while the test score is independent of the calibration sample but follows a possibly different distribution function \(F_{\widetilde T}\). Let \(F_T^{-1}\) denote the
generalized inverse, or quantile function, of the calibration score
distribution. The conformal threshold \(T_{(k)}\) is therefore generated under the calibration law, but its coverage is evaluated under the test law.

For a realized calibration sample, the realized test coverage is
\[
    D_{n,k}
    :=
    \Pr{
        \widetilde T\leq T_{(k)}
        \,\middle|\,
        T_1,\ldots,T_n
    }
    =
    F_{\widetilde T}(T_{(k)}).
\]
Since the calibration sample is i.i.d.\ with distribution function \(F_T\), the probability integral transform gives
\[
    B_{n,k}
    :=
    F_T(T_{(k)})
    \sim
    \beta_{n,k}.
\]
Thus the shifted realized coverage can be written on the beta scale as
\[
    D_{n,k}
    =
    F_{\widetilde T}\!\left(F_T^{-1}(B_{n,k})\right).
\]
Equivalently, if
$
    h:=F_{\widetilde T}\circ F_T^{-1},
$
then
\[
    D_{n,k}\stackrel{d}{=}h(B_{n,k}),
    \qquad
    \nu_{n,k}=h_{\#}\beta_{n,k}.
\]
Distribution shift therefore does not alter the beta order-statistic
mechanism itself. It transports the beta law through the map relating calibration and test score distributions. When \(F_{\widetilde T}=F_T\), the map \(h\) is the identity and \(\nu_{n,k}=\beta_{n,k}\), recovering the i.i.d.\ reference case. In general, the relevant comparison is
\[
    W_1(\nu_{n,k},\beta_{n,k})
    =
    W_1\!\left(
        (F_{\widetilde T}\circ F_T^{-1})_{\#}\beta_{n,k},
        \beta_{n,k}
    \right).
\]
We illustrate the distribution-shift mechanism in a setting where the transport map is explicit.

\subsubsection{Example: half-normal scale shift}

The example is motivated by the usual split conformal regression score
\[
    T = |Y-\widehat{\mu}(X)|.
\]
If the prediction error is centered Gaussian, then \(T\) has a half-normal distribution. Let
\[
    T_1,\ldots,T_n \overset{\mathrm{iid}}{\sim} |N(0,\sigma_T^2)|,
    \qquad
    \widetilde T\sim |N(0,\sigma_{\widetilde T}^2)|,
\]
with \(\widetilde T\) independent of the calibration sample, and define the scale ratio
\[
    r:=\frac{\sigma_{\widetilde T}}{\sigma_T}.
\]
Writing
$
    q(u):=\Phi^{-1}\!\left(\frac{u+1}{2}\right),
$
the calibration-to-test transport map is
\[
    h_r(u)
    =
    2\Phi\!\left(\frac{q(u)}{r}\right)-1.
\]
Thus, if $
    B_{n,k}:=F_{\sigma_T}(T_{(k)})\sim \beta_{n,k},
$
then the realized coverage under the shifted test distribution satisfies
\[
    D_{n,k}
    =
    F_{\sigma_{\widetilde T}}(T_{(k)})
    \stackrel d=
    h_r(B_{n,k}),
    \qquad
    \nu_{n,k}^{(r)}=(h_r)_\#\beta_{n,k}.
\]
Therefore, in the half-normal scale-shift model, distribution shift does not alter the beta order-statistic law on the calibration scale; it transports this law through the deterministic map \(h_r\). Details on the derivation of \(h_r\), its inverse, and the corresponding density transformation are given in Appendix~\ref{app:half_normal_scale_shift_details}.

The direction of the deformation is determined by \(r\). If \(r>1\), the test scores are more dispersed than the calibration scores and \(h_r(u)<u\), leading to undercoverage. If \(r<1\), then \(h_r(u)>u\), leading to overcoverage. Moreover,
\[
    \operatorname{Cov}(k)
    =
    \mathbb E[h_r(B_{n,k})],
\]
so that
\[
    \operatorname{Cov}(k)-\frac{k}{n+1}
    =
    \mathbb E[h_r(B_{n,k})-B_{n,k}].
\]
In this example, the Wasserstein coverage-gap bound is:
\[
    W_1(\nu_{n,k}^{(r)},\beta_{n,k})
    =
    \left|
        \operatorname{Cov}(k)-\frac{k}{n+1}
    \right|.
\]

Figure~\ref{fig:half-normal-beta-deformation} illustrates the induced deformation for \(n=30\), \(\gamma=0.9\), and
\(k=\lceil(n+1)\gamma\rceil=28\), so that the i.i.d.\ reference law is \(\operatorname{Beta}(28,3)\). The curve \(r=1\) corresponds to this reference law. As \(r\) increases above one, the realized-coverage law is transported to the left, increasing the probability of low calibration-conditional coverage. The vertical lines mark the nominal level \(\gamma=0.9\) and the bad-calibration threshold \(\gamma-\eta=0.85\), with \(\eta=0.05\).

\begin{figure}[t]
    \centering
    \includegraphics[width=.7\textwidth]{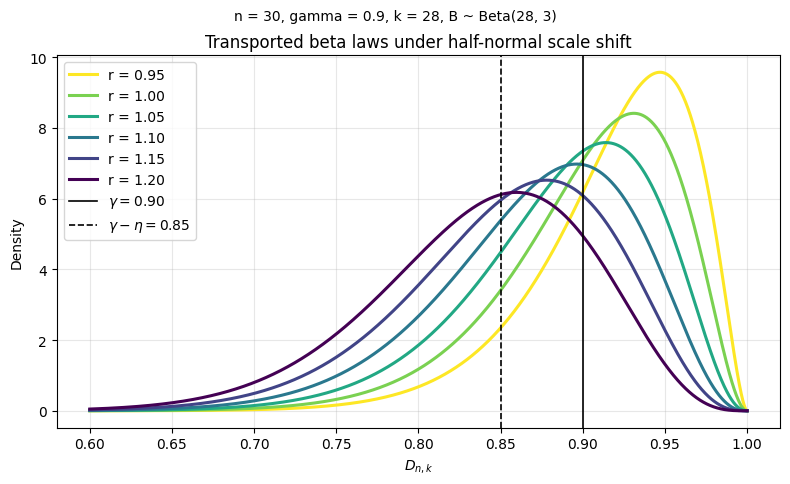}
    \caption{
    Transported beta laws under half-normal scale shift for
    \(n=30\), \(\gamma=0.9\), and \(k=28\), so that
    \(B_{n,k}\sim\operatorname{Beta}(28,3)\). Under the scale ratio
    \(r=\sigma_{\widetilde T}/\sigma_T\), the realized coverage satisfies
    \(D_{n,k}\stackrel d=h_r(B_{n,k})\). As \(r\) increases above one, the law
    is transported to the left, increasing the probability of bad calibration.
    The vertical lines mark \(\gamma=0.90\) and
    \(\gamma-\eta=0.85\), with \(\eta=0.05\).
    }
    \label{fig:half-normal-beta-deformation}
\end{figure}

For small scale shifts, writing \(r=e^\delta\), the local approximation derived in Appendix~\ref{app:half_normal_scale_shift_details} gives
\[
    \operatorname{Cov}(k_\gamma)-\gamma
    =
    -2\delta\,\phi(q(\gamma))q(\gamma)
    +
    O(\delta^2)+O(n^{-1}),
    \qquad
    q(\gamma)=\Phi^{-1}\!\left(\frac{\gamma+1}{2}\right).
\]
At \(\gamma=0.9\), the coefficient
\(2\phi(q(\gamma))q(\gamma)\) is approximately \(0.339\). Thus a \(10\%\) increase in the test-score scale, \(r=1.1\), corresponds to an approximate coverage loss of about \(0.032\), or roughly three percentage points.

Thus, in the half-normal scale-shift model, the abstract Wasserstein
coverage-gap bound becomes an exact identity: the transport distance from the
beta reference is precisely the marginal coverage loss, and the transported
law determines bad-calibration probabilities through its left tail
$\Pr{h_r(B_{n,k})\leq t}$.
Figure~\ref{fig:fig_halfnormal} in Appendix~\ref{app:half_normal_scale_shift_details}
makes this concrete across $r \in \{0.5, 0.8, 2.0\}$ and 
$n \in \{50, 200, 1000\}$: the shaded area between the CDFs grows with 
$|r-1|$ and is insensitive to $n$, while the right panel reveals that 
$W_1$ is asymmetric around $r=1$, with undercoverage ($r>1$) incurring 
a larger transport cost than overcoverage ($r<1$) for the same departure, 
reflecting the nonlinearity of $h_r$ and the concentration of $\beta_{n,k}$ 
near $\gamma$.

\subsection{Clustered Calibration and Effective Sample Size}

Proposition~\ref{prop:dependent_calibration_counting} has a simple concrete
instance in clustered or replicated calibration data. Think of \(b\) independent
experimental units, subjects, videos, or spatial locations, each producing \(m\)
nearly identical calibration scores, so that \(n=mb\). The following idealized
model takes the within-cluster dependence to be perfect. Let
\[
    V_1,\ldots,V_b
    \overset{\mathrm{iid}}{\sim}
    \operatorname{Unif}(0,1),
\]
and set
\[
    U_{j,r}
    =
    V_j,
    \qquad
    j=1,\ldots,b,\quad r=1,\ldots,m,
\]
and then relabel \(\{U_{j,r}\}\) as \(U_1,\ldots,U_n\). Each \(U_i\) is still
marginally uniform, but the effective number of
independent calibration units is \(b\), not \(n\). Indeed,
\[
    N_n(t)
    =
    \sum_{i=1}^n \mathbf{1}\{U_i\leq t\}
    =
    m\sum_{j=1}^b \mathbf{1}\{V_j\leq t\},
\]
so \(N_n(t)\stackrel d=mZ_{b,t}\), with
\(Z_{b,t}\sim\operatorname{Bin}(b,t)\).
Proposition~\ref{prop:dependent_calibration_counting} gives
\[
    W_1\!\left(
        \mathcal L(U_{(k)}),
        \beta_{n,k}
    \right)
    =
    \int_0^1
    \left|
        \Pr{Z_{b,t}\geq \lceil k/m\rceil}
        -
        \Pr{Z_{n,t}\geq k}
    \right|
    \,dt,
    \qquad
    Z_{n,t}\sim\operatorname{Bin}(n,t).
\]
Equivalently,
\[
    U_{(k)}
    =
    V_{(\lceil k/m\rceil)}
    \quad\text{a.s.},
\]
and therefore
\[
    U_{(k)}
    \sim
    \operatorname{Beta}
    \left(
        \lceil k/m\rceil,\,
        b+1-\lceil k/m\rceil
    \right).
\]
This example is an extreme cluster model and creates ties within the calibration
block. The counting identity and the Wasserstein comparison do not require
absence of ties, but the example can also be viewed as the limit of a model with
small within-cluster jitter. Its purpose is to isolate the practical effect:
using \(n\) highly clustered calibration points behaves, for the realized
coverage law, more like using \(b\) independent calibration units.

Thus, in the no-shift decoupled-test version of this application,
\(D_{n,k}\stackrel d=U_{(k)}\). If
\[
    \rho_{\mathrm{cl}}(k)
    :=
    W_1\!\left(
        \operatorname{Beta}
        \left(
            \lceil k/m\rceil,\,
            b+1-\lceil k/m\rceil
        \right),
        \beta_{n,k}
    \right),
\]
then Theorem~\ref{thm:w1_coverage_gap} gives
\[
    \left|
        \operatorname{Cov}(k)-\frac{k}{n+1}
    \right|
    \leq
    \rho_{\mathrm{cl}}(k).
\]
For \(k_\gamma=\lceil(n+1)\gamma\rceil\), the target level is therefore
degraded by at most \(\rho_{\mathrm{cl}}(k_\gamma)+1/(n+1)\).
Theorem~\ref{thm:bad_calibration} also gives, for every \(t\in[0,1]\) and
\(\varepsilon>0\),
\[
    \Pr{D_{n,k}\leq t}
    \leq
    \Pr{B_{n,k}\leq t+\varepsilon}
    +
    \frac{\rho_{\mathrm{cl}}(k)}{\varepsilon},
    \qquad
    B_{n,k}\sim\beta_{n,k}.
\]
In this example, both conclusions say that clustering affects conformal
coverage only through the transport from the nominal beta law based on \(n\)
points to the effective beta law based on \(b\) independent units.

\subsection{Stationary Mixing Processes}

We now apply Proposition~\ref{prop:quantile_be_beta_transport} to a stationary
score process \((T_i)_{i\in\mathbb Z}\). Let \(F_T\) be its continuous marginal
distribution function. The calibration sample is
\(\mathcal C_n=(T_1,\ldots,T_n)\), and the test score is taken \(\ell\) steps
after the calibration block:
\[
    \widetilde T=T_{n+\ell},
    \qquad \ell\geq 1.
\]
The transformed variables
$
    U_i=F_T(T_i)
$
are marginally uniform on \([0,1]\), but they are not independent in general.

Let \(\mathcal F_a^b=\sigma(T_i:a\leq i\leq b)\). We use the following
standard mixing coefficients:
\[
    \alpha_{\mathrm{mix}}(r)
    :=
    \sup_m
    \sup_{\substack{
        A\in \mathcal F_{-\infty}^m,\,
        B\in \mathcal F_{m+r}^{\infty}
    }}
    \left|
        \Pr{B\cap A}
        -
        \Pr{A}\Pr{B}
    \right|
\]
for strong, or \(\alpha\)-, mixing,
\[
    \phi_{\mathrm{mix}}(r)
    :=
    \sup_m
    \sup_{\substack{
        A\in \mathcal F_{-\infty}^m,\,
        B\in \mathcal F_{m+r}^{\infty}\\
        \Pr{A}>0
    }}
    \left|
        \Pr{B\mid A}
        -
        \Pr{B}
    \right|
\]
for \(\phi\)-mixing, and
\[
    \beta_{\mathrm{mix}}(r)
    :=
    \sup_m
    \E{
        \sup_{B\in\mathcal F_{m+r}^{\infty}}
        \left|
            \Pr{B\mid\mathcal F_{-\infty}^m}
            -
            \Pr{B}
        \right|
    }
\]
for absolute regularity, or \(\beta\)-mixing. Different normalizations of the
\(\beta\)-mixing coefficient appear in the literature; only its role as a
decoupling error is used below.

For fixed \(t\), the event \(\{T_{n+\ell}\leq t\}\) is separated from
\(\mathcal C_n\) by \(\ell\) time steps. Under \(\phi\)-mixing this gives the
direct conditional decoupling bound\footnote{Formally, we first
apply the mixing bound on the countable set \(t\in\mathbb Q\), choosing an
almost-sure event on which all these inequalities hold. For a regular
conditional distribution of \(T_{n+\ell}\) given \(\mathcal C_n\), the
conditional c.d.f.\ is right-continuous in \(t\); since \(F_T\) is continuous,
the bound extends from rational thresholds to all \(t\in\mathbb R\). This
almost-sure uniform version can then be evaluated at the random threshold
\(T_{(k)}\).}
\[
    \sup_{t\in\mathbb R}
    \left|
        \Pr{T_{n+\ell}\leq t\mid \mathcal C_n}
        -
        F_T(t)
    \right|
    \leq
    \phi_{\mathrm{mix}}(\ell)
    \qquad
    \text{a.s.}
\]
Evaluating this bound at the random threshold \(T_{(k)}\), and setting
\[
    D^{(\ell)}_{n,k}
    :=
    \Pr{T_{n+\ell}\leq T_{(k)}\mid \mathcal C_n},
\]
we obtain
\[
    \left|
        D^{(\ell)}_{n,k}
        -
        F_T(T_{(k)})
    \right|
    \leq
    \phi_{\mathrm{mix}}(\ell)
    \qquad
    \text{a.s.}
\]
Since \(F_T(T_{(k)})=U_{(k)}\), Proposition~\ref{prop:w1_coupling_upper_bound}
and the triangle inequality give
\[
    W_1\!\left(
        \mathcal L(D^{(\ell)}_{n,k}),
        \beta_{n,k}
    \right)
    \leq
    \phi_{\mathrm{mix}}(\ell)
    +
    W_1\!\left(
        \mathcal L(U_{(k)}),
        \beta_{n,k}
    \right).
\]
The first term is the price of using a future dependent test point; the second
is the deviation of the calibration order statistic from the i.i.d.\ beta
reference.

A similar separation can be obtained for absolutely regular processes by
blocking. Coupling lemmas for separated blocks, such as
\citet[Corollary~2.7]{yu1994rates}, allow one to replace separated dependent
blocks by independent copies at a cost controlled by the corresponding
\(\beta\)-mixing coefficients. We do not need the explicit block construction
in what follows. The relevant point is that, after the test-calibration
dependence is handled by a direct \(\phi\)-mixing argument, by a
\(\beta\)-mixing blocking argument, or by an external independence assumption,
the remaining problem is to control the calibration order statistic
\(U_{(k)}\).

We summarize the decoupling step by assuming that, for some deterministic
\(\Delta_\ell\geq0\),
\[
    W_1\!\left(
        \mathcal L(D^{(\ell)}_{n,k}),
        \mathcal L(U_{(k)})
    \right)
    \leq
    \Delta_\ell.
\]
For the direct \(\phi\)-mixing argument above one may take
\(\Delta_\ell=\phi_{\mathrm{mix}}(\ell)\); if the future test score has already
been decoupled from the calibration block, then \(\Delta_\ell=0\). It remains
to control \(\mathcal L(U_{(k)})\). The external input we use is a
Berry--Esseen theorem for sample quantiles of strongly mixing, equivalently
\(\alpha\)-mixing, sequences.

\begin{theorem}[Berry--Esseen Bound for \(\alpha\)-Mixing Sample Quantiles]
\label{thm:lahiri_sun_quantile_be}
Let \((X_i)_{i\in\mathbb Z}\) be a stationary strongly mixing
(\(\alpha\)-mixing) sequence with marginal distribution function \(F\). Fix
\(p\in(0,1)\), write \(F^{-1}\) for the generalized inverse of \(F\), and
suppose that \(F\) has density \(f\) near \(F^{-1}(p)\). Let
\[
    F_n(x):=\frac1n\sum_{i=1}^n\mathbf{1}\{X_i\leq x\}.
\]
We write \(F_n^{-1}\) for the generalized inverse of \(F_n\).
Assume the regularity conditions (C.1)--(C.5) of
\citet[Theorem, Section~2.3; see also Eq.~(1.4)]{lahiri_sun_2009}, including
the strong-mixing rate condition on \(\alpha_{\mathrm{mix}}\). Then
\[
    \sup_{x\in\mathbb R}
    \left|
        \Pr{
            \sqrt n\{F_n^{-1}(p)-F^{-1}(p)\}\leq x
        }
        -
        \Phi\!\left(\frac{x}{\tau_\infty(p)}\right)
    \right|
    =
    O(n^{-1/2}),
\]
where
\[
    \tau_\infty^2(p)
    =
    \frac{
        \sigma_\infty^2(F^{-1}(p))
    }{
        f^2(F^{-1}(p))
    },
\]
and
\[
    \sigma_\infty^2(F^{-1}(p))
    =
    \sum_{j\in\mathbb Z}
    \operatorname{Cov}
    \left(
        \mathbf{1}\{X_0\leq F^{-1}(p)\},
        \mathbf{1}\{X_j\leq F^{-1}(p)\}
    \right).
\]
\end{theorem}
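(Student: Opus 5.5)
This theorem is an external input, namely the Berry--Esseen theorem of \citet{lahiri_sun_2009}, so I would not reprove the dependent-data Edgeworth machinery. Instead, I would reduce the quantile statement to a Berry--Esseen bound for the empirical count, which is the form in which such results are proved.

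\textbf{Reduction to a count.} The starting point is the counting identity of Proposition~\ref{prop:dependent_calibration_counting}, which holds without independence. Write \(\xi_p=F^{-1}(p)\) and \(x_n=\xi_p+x/\sqrt n\). Then
\[
    \Pr{\sqrt n\{F_n^{-1}(p)-\xi_p\}\leq x}
    =
    \Pr{F_n(x_n)\geq p}
    =
    \Pr{\sum_{i=1}^n\bigl(\mathbf{1}\{X_i\leq x_n\}-F(x_n)\bigr)\geq n\{p-F(x_n)\}}.
\]
The summands are bounded, centered functionals of a stationary \(\alpha\)-mixing sequence.

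\textbf{Step 1: Berry--Esseen for the count.} Under the mixing rate in (C.1)--(C.5), I would invoke a Berry--Esseen bound for partial sums of bounded \(\alpha\)-mixing variables. This bound must be \emph{uniform} in the threshold \(x_n\) ranging over a shrinking neighbourhood of \(\xi_p\). It gives
\[
    \Pr{S_n(x_n)\geq y}=1-\Phi\bigl(y/(\sqrt n\,\sigma_n(x_n))\bigr)+O(n^{-1/2}),
\]
where \(\sigma_n^2(x)\) is the finite-\(n\) long-run variance of the indicators. Two further facts are needed. First, summable covariances under the mixing rate give \(|\sigma_n^2(x)-\sigma_\infty^2(x)|=O(n^{-1})\). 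Second, continuity of \(\sigma_\infty^2\) near \(\xi_p\) gives \(\sigma_\infty(x_n)=\sigma_\infty(\xi_p)+O(|x|/\sqrt n)\).

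\textbf{Step 2: Linearize the centering.} A density \(f\) near \(\xi_p\), presumably with a Lipschitz or differentiable condition in (C.\(\cdot\)), gives
\[
    n\{p-F(x_n)\}=-\sqrt n\,f(\xi_p)\,x+O(x^2).
\]
Substituting into Step 1, the probability becomes
\[
    \Phi\bigl(f(\xi_p)x/\sigma_\infty(\xi_p)\bigr)+O(n^{-1/2})=\Phi\bigl(x/\tau_\infty(p)\bigr)+O(n^{-1/2})
\]
for \(|x|\leq c\sqrt{\log n}\). For \(|x|\) beyond that range, both the exact probability and \(\Phi(x/\tau_\infty)\) lie within \(O(n^{-1/2})\) of \(0\) or \(1\), by monotonicity in \(x\) together with a moment or exponential inequality for mixing sums. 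Hence the supremum over all \(x\) is \(O(n^{-1/2})\).

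\textbf{Main obstacle.} The hard step is Step 1. Getting a \(n^{-1/2}\) Berry--Esseen rate for indicator sums that is uniform over a moving threshold requires the blocking and conditional characteristic-function arguments of Lahiri and Sun, and this is exactly where (C.1)--(C.5) enter. The second-order Taylor remainder \(O(x^2)/\sqrt n\) in the Gaussian argument also needs care, but it is absorbed on the moderate range. I would therefore cite \citet{lahiri_sun_2009} for Step 1 and present Steps 2 and the tail argument as the routine translation from the count to the quantile.
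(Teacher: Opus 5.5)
The paper gives no proof of this theorem. It is imported from \citet{lahiri_sun_2009}, and (C.1)--(C.5) are written into the hypotheses so that the citation applies. Your bottom line is the same citation, so in substance you agree with the paper. But the cited result \emph{is} the quantile statement, so if you take it as given, your Steps~2--3 are redundant. If you instead want Step~1 as a stand-alone count-level input, you cannot take it off the shelf: there is no standard $n^{-1/2}$ Berry--Esseen bound for bounded $\alpha$-mixing sums under a polynomial mixing rate, uniform over a moving threshold. That rate is exactly what Lahiri and Sun's argument supplies, so you would be citing the inside of their proof rather than a stated result.

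The step that actually fails in your translation is the variance at the moving threshold. The claim ``continuity of $\sigma_\infty^2$ near $\xi_p$ gives $\sigma_\infty(x_n)=\sigma_\infty(\xi_p)+O(|x|/\sqrt n)$'' is a non sequitur. Continuity gives no rate; you need $\sigma_\infty^2(\cdot)$ to be Lipschitz at $\xi_p$. The error $\Phi(f(\xi_p)x/\sigma_\infty(x_n))-\Phi(f(\xi_p)x/\sigma_\infty(\xi_p))$ is of order $|x|\phi(cx)\,|\sigma_\infty(x_n)-\sigma_\infty(\xi_p)|$, so a modulus of continuity $h^\theta$ with $\theta<1$ makes your argument yield only the rate $n^{-\theta/2}$.

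The $\alpha$-mixing bounds do not deliver the Lipschitz property. Write $\gamma_j(x):=\operatorname{Cov}(\mathbf{1}\{X_0\leq x\},\mathbf{1}\{X_j\leq x\})$. The mixing bounds give only $|\gamma_j(\xi_p+h)-\gamma_j(\xi_p)|\leq C\min\{|F(\xi_p+h)-F(\xi_p)|,\alpha_{\mathrm{mix}}(j)\}$. Summing this over $j$ gives $O(|h|^{1-1/\kappa})$ when $\alpha_{\mathrm{mix}}(j)\asymp j^{-\kappa}$, and $O(|h|\log(1/|h|))$ under geometric mixing, not $O(|h|)$. So the Lipschitz property must come from the additional structure in (C.1)--(C.5), which your sketch does not use. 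Relatedly, $|\sigma_n^2-\sigma_\infty^2|=O(n^{-1})$ requires $\sum_j j\,\alpha_{\mathrm{mix}}(j)<\infty$, not mere summability of covariances.

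Your tail step, by contrast, is fine and simpler than you suggest. Monotonicity in $x$, together with the moderate-range bound evaluated at $x=\pm c\sqrt{\log n}$ with $c\geq\tau_\infty(p)$, already suffices without any moment inequality.
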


Suppose that the transformed calibration process \((U_i)_{i\in\mathbb Z}\)
satisfies the assumptions of Theorem~\ref{thm:lahiri_sun_quantile_be} at
quantile level \(\gamma\). Its marginal distribution is uniform, so
\(F^{-1}(\gamma)=\gamma\), \(f(\gamma)=1\), and, with the generalized inverse
convention,
\(F_n^{-1}(\gamma)=U_{(\lceil n\gamma\rceil)}\). The conformal index
\(k_\gamma=\lceil(n+1)\gamma\rceil\) differs from \(\lceil n\gamma\rceil\) by
at most one, so the same Berry--Esseen approximation applies to
\(U_{(k_\gamma)}\), with the index perturbation absorbed in the
\(O(n^{-1/2})\) remainder. Thus the Berry--Esseen assumption in
Proposition~\ref{prop:quantile_be_beta_transport} holds for \(k_\gamma\) with
\(a_n=O(n^{-1/2})\) and
\[
    \tau_\gamma^2
    =
    \gamma(1-\gamma)
    +
    2\sum_{j\geq 1}
    \operatorname{Cov}
    \left(
        \mathbf{1}\{U_0\leq \gamma\},
        \mathbf{1}\{U_j\leq \gamma\}
    \right),
\]
whenever the covariance series is well defined and \(\tau_\gamma^2>0\).
Therefore,
\begin{align}
\label{eq:phi_mixing_w_bound}
    W_1\!\left(
        \mathcal L(D^{(\ell)}_{n,k_\gamma}),
        \beta_{n,k_\gamma}
    \right)
    &\leq
    \Delta_\ell
    +
    \sqrt{\frac{2}{\pi n}}
    \left|
        \tau_\gamma-\sqrt{\gamma(1-\gamma)}
    \right|
    +
    O(n^{-1/2}).
\end{align}

Thus Theorem~\ref{thm:w1_coverage_gap}, together with
\(|k_\gamma/(n+1)-\gamma|\leq 1/(n+1)\), gives the marginal coverage bound
\[
\begin{aligned}
    \left|
        \Pr{T_{n+\ell}\leq T_{(k_\gamma)}}
        -
        \gamma
    \right|
    &\leq
    \Delta_\ell
    +
    \sqrt{\frac{2}{\pi n}}
    \left|
        \tau_\gamma-\sqrt{\gamma(1-\gamma)}
    \right|
    +
    O(n^{-1/2}).
\end{aligned}
\]
Theorem~\ref{thm:bad_calibration} gives the corresponding bad-calibration
statement from the same radius. Let
\[
    \rho_{n,\ell}
    :=
    W_1\!\left(
        \mathcal L(D^{(\ell)}_{n,k_\gamma}),
        \beta_{n,k_\gamma}
    \right).
\]
The Wasserstein bound above shows that
\[
    \rho_{n,\ell}
    \leq
    \Delta_\ell
    +
    \sqrt{\frac{2}{\pi n}}
    \left|
        \tau_\gamma-\sqrt{\gamma(1-\gamma)}
    \right|
    +
    O(n^{-1/2}).
\]
Applied with \(p=1\), Theorem~\ref{thm:bad_calibration} gives that for every
\(t\in[0,1]\) and every
\(\varepsilon>0\),
\[
    \Pr{D^{(\ell)}_{n,k_\gamma}\leq t}
    \leq
    \Pr{B_{n,k_\gamma}\leq t+\varepsilon}
    +
    \frac{\rho_{n,\ell}}{\varepsilon},
    \qquad
    B_{n,k_\gamma}\sim\beta_{n,k_\gamma}.
\]
In particular, for any \(\eta\in(0,\gamma)\),
\[
    \Pr{D^{(\ell)}_{n,k_\gamma}\leq \gamma-\eta}
    \leq
    \Pr{B_{n,k_\gamma}\leq \gamma-\eta/2}
    +
    \frac{2\rho_{n,\ell}}{\eta}.
\]
Thus the probability of a low realized-coverage event is controlled by the
i.i.d.\ beta lower tail, plus the decoupling and Berry--Esseen errors encoded
in \(\rho_{n,\ell}\).

\subsubsection{Example: AR(1) process}

Let $(T_i)_{i \in \mathbb{Z}}$ be a stationary AR(1) process,
\begin{equation*}
    T_i = a\, T_{i-1} + \varepsilon_i, \qquad 
    \varepsilon_i \overset{\text{iid}}{\sim} N(0,\, 1-a^2), \quad |a| < 1. 
\end{equation*}
Then $T_i\sim N(0,1)$ for every $i$, but the scores are serially dependent. The calibration sample is \(\mathcal C_n=(T_1,\ldots,T_n)\), and the test score is \(T_{n+\ell}\) for a fixed horizon \(\ell\geq1\). By the Markov property,
\[
    T_{n+\ell}\mid\mathcal C_n
    \sim
    N(a^\ell T_n,\,1-a^{2\ell}).
\]
Therefore the realized coverage at threshold \(T_{(k)}\) is
\begin{equation}
\label{eq:ar1_Dnk}
    D^{(\ell)}_{n,k} 
    =
    \Pr{T_{n+\ell} \leq T_{(k)} \mid \mathcal C_n}
    = \Phi\!\left(\frac{T_{(k)} - a^{\ell} T_n}{\sqrt{1 - a^{2\ell}}}\right).
\end{equation}

For this example, the decoupling radius $\Delta_\ell$ in \eqref{eq:phi_mixing_w_bound} can be obtained directly from the Markov property without invoking mixing conditions: since $T_{n+\ell}\mid\mathcal C_n\sim N(a^\ell T_n,1-a^{2\ell})$ differs from
$N(0,1)$ only through a mean shift of order $|a|^\ell$, one obtains $\Delta_\ell \lesssim |a|^\ell$. The geometric decay used below thus arises
from the Markov structure of the AR(1) alone.

\begin{figure}[h]
    \centering
    \includegraphics[width=\linewidth]{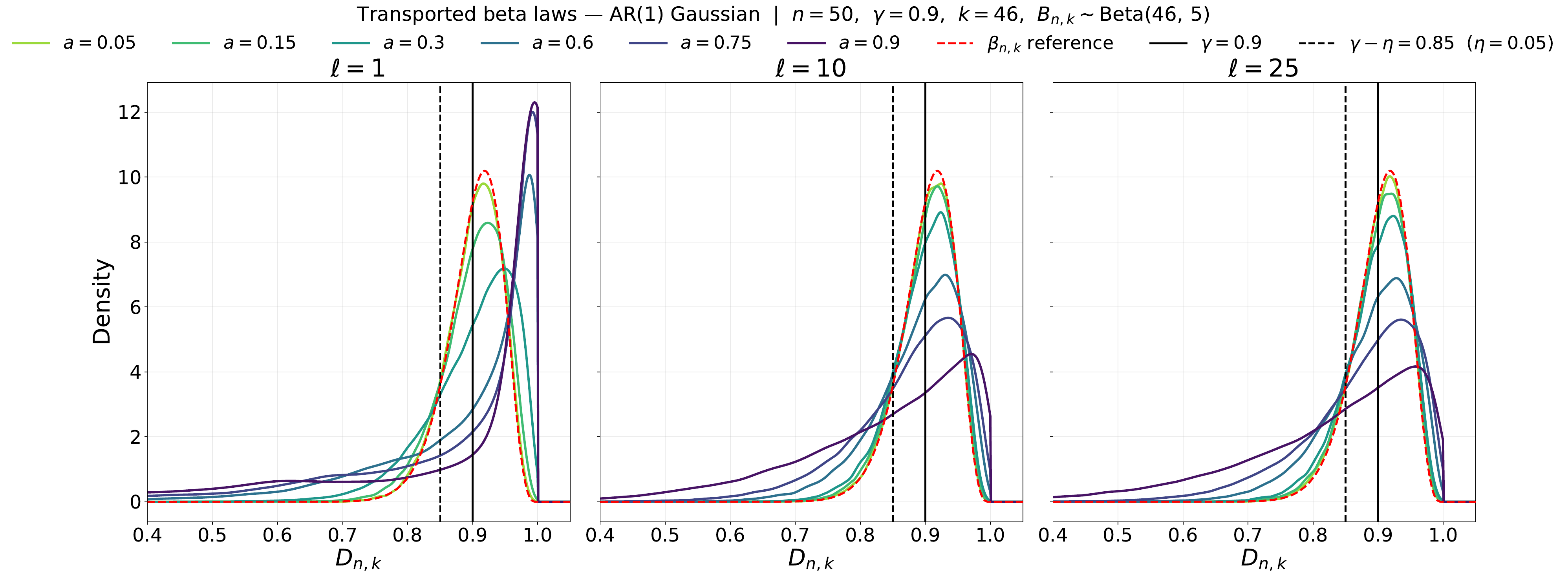}
    \caption{Realized-coverage laws \(\nu_{n,k}\) for the Gaussian AR(1) model with
\(n=50\) and \(\gamma=0.9\), against the i.i.d. beta reference \(\beta_{n,k}\) (dashed red).
At \(\ell=1\), larger positive values of \(a\) produce a more dispersed realized-coverage law, with both a sharper concentration near 1 and a heavier left tail extending below the bad-calibration threshold
\(\gamma-\eta=0.85\). As \(\ell\) grows, the direct test-calibration dependence weakens and the laws approach the calibration-order-statistic law \(\mathcal L(U_{(k)})\); this limit equals \(\beta_{n,k}\) only when the calibration scores are independent.}
    \label{fig:fig_ar1_cdfs}
\end{figure}

The expression contains two effects. The order statistic \(T_{(k)}\) is the
calibration contribution, while \(a^\ell T_n\) is the remaining dependence
between the future test score and the last calibration score. As \(\ell\)
increases, \(a^\ell\) decreases geometrically and the test score becomes closer
to an independent draw from the marginal \(N(0,1)\). In the limit
\(\ell\to\infty\),
\[
    D^{(\ell)}_{n,k}
    \to
    \Phi(T_{(k)})
    =
    U_{(k)}
\]
in distribution. The remaining deviation from the beta reference is then the
calibration-order-statistic effect controlled by the \(\alpha\)-mixing
Berry--Esseen argument above.

Figure~\ref{fig:fig_ar1_cdfs} shows the induced realized-coverage laws for
fixed \(n=50\) and \(\gamma=0.9\), across several values of \(a\) and
\(\ell\in\{1,10,25\}\). Larger \(a\) and smaller \(\ell\) produce stronger
test-calibration effects; increasing \(\ell\) removes this contribution and
moves the law toward the calibration-order-statistic law \(U_{(k)}\), which may
still differ from the beta benchmark when calibration dependence is strong.

The complementary roles of $\ell$ and $n$ for the AR(1) example are illustrated in Figures~\ref{fig:fig_ar1_bound_h},
\ref{fig:fig_ar1_thm7} and~\ref{fig:fig_ar1_coverage_bound} in
Appendix~\ref{app:ar_1_illustrations}. Figure~\ref{fig:fig_ar1_bound_h} displays the Monte Carlo estimate of $W_1(\nu_{n,k_\gamma},\beta_{n,k_\gamma})$ and the
bound~\eqref{eq:phi_mixing_w_bound} as functions of the horizon $\ell$, for fixed $n=200$ and $a\in\{0,\,0.3,\,0.6,\,0.9\}$. For $a=0$ the scores are
i.i.d.\ and $W_1$ is negligible for all $\ell$. For $a>0$, the test--calibration term $|a|^\ell$ decays geometrically, and both quantities flatten as $\ell$ grows: the bound flattens at the Berry--Esseen floor
$\sqrt{2/(\pi n)}\,|\tau_\gamma-\sqrt{\gamma(1-\gamma)}|$, while $W_1$ flattens at the (typically smaller) limiting value $W_1(\mathcal L(U_{(k_\gamma)}),\beta_{n,k_\gamma})$ associated with the calibration order statistic alone.

Figure~\ref{fig:fig_ar1_thm7} fixes $\ell\in\{1,\,10,\,25\}$ and varies $n$. The Monte Carlo curves confirm the marginal-coverage gap inequality of
Theorem~\ref{thm:w1_coverage_gap}, $|\operatorname{Cov}(k_\gamma)-k_\gamma/(n+1)|\leq W_1$. The asymptotic upper bound on $W_1$, on the other hand, is informative only once $n$ is large enough relative to $a$ and $\ell$ for the Berry--Esseen approximation to take effect: it is conservative for small $n$ combined with large $a$ and small $\ell$ (top-right panels), and becomes tighter once $|a|^\ell$ is small,
so that the bound and $W_1$ are jointly dominated by the Berry--Esseen calibration term.

Finally, Figure~\ref{fig:fig_ar1_coverage_bound} translates these Wasserstein distances into bad-calibration probabilities through Theorem~\ref{thm:bad_calibration}: the Markov-type tail control $\mathbb{P}(B_{n,k_\gamma}\le \gamma-\eta/2)+2W_1/\eta$ is loose, and may exceed one, whenever $W_1$ is comparable to or larger than $\eta$; it tightens progressively as $\ell$ increases and the test-dependence contribution to $W_1$ decays.

\section{Conclusion}

We introduced a transported-beta perspective on split conformal prediction. Instead of
viewing conformal validity only through marginal coverage, we studied the law of the
calibration-conditional coverage induced by the realized calibration sample. In the
continuous i.i.d.\ case this law is exactly \(\mathrm{Beta}(k,n+1-k)\), so the classical
split conformal coverage level is recovered as its mean, while its lower tail quantifies
bad-calibration events.

The main message is that this beta law remains a useful finite-sample reference beyond
the i.i.d. setting. By comparing the realized-coverage law \(\nu_{n,k}\) with the beta
benchmark \(\beta_{n,k}\) on the coverage scale, Wasserstein distances give direct
control of marginal coverage gaps and bad-calibration probabilities. This formulation
separates different mechanisms of non-i.i.d. behavior: test-side distribution shift
transports the beta law through a deterministic coverage map, whereas calibration
dependence changes the order-statistic law itself.

The examples illustrate how this viewpoint can be used as a diagnostic and comparison
tool rather than as a single new conformal algorithm. In scale-shift models the deformation
is explicit and the Wasserstein bound can be sharp; in clustered and mixing settings the
distance to the beta reference captures effective sample size and dependence effects. Simulations on dependent uniform processes confirm that the Berry--Esseen approximation
tracks the empirical Wasserstein distance closely, even at moderate sample sizes.
Future work could focus on estimating the transported-beta radius from data, extending
the framework to adaptive or weighted conformal procedures, and developing sharper finite-sample tail comparisons.

\newpage
\bibliography{references}

\newpage

\appendix
\section{Proofs}
\subsection{Proof of Proposition~\ref{prop:conditional_coverage_beta}}
\label{app:proof_conditional_coverage_beta}

\begin{proof}
    Conditional on the calibration sample $S_1,\ldots,S_n$, the order statistic
    $S_{(k)}$ is fixed. Since $S_{n+1}$ is independent of the calibration sample
    and has distribution $F_S$, the conditional coverage is simply the
    distribution function evaluated at the realized threshold:
    \[
        C_{n,k}
        =
        \Pr{
            S_{n+1}\leq S_{(k)}
            \,\middle|\,
            S_1,\ldots,S_n
        }
        =
        F_S(S_{(k)}).
    \]

    The remaining step is to identify the distribution of this random value as
    the calibration sample varies. Define
    \[
        U_i = F_S(S_i),
        \qquad i=1,\ldots,n.
    \]
    By the probability integral transform, the variables
    $U_1,\ldots,U_n$ are i.i.d.\ $\operatorname{Unif}(0,1)$. Since $F_S$ is
    monotone, the ordering of the $S_i$'s is preserved, and hence
    \[
        F_S(S_{(k)}) = U_{(k)}.
    \]
    The $k$-th order statistic of $n$ i.i.d.\ uniform random variables has
    distribution
    \[
        U_{(k)}
        \sim
        \operatorname{Beta}(k,n+1-k).
    \]
    Therefore,
    \[
        C_{n,k}
        =
        F_S(S_{(k)})
        \sim
        \operatorname{Beta}(k,n+1-k).
    \]
\end{proof}

\subsection{Proof of Theorem~\ref{thm:w1_coverage_gap}}
\label{app:proof_w1_coverage_gap}

\begin{proof}
The i.i.d.\ reference law satisfies
\[
    \Ex{Z\sim\beta_{n,k}}{Z}
    =
    \frac{k}{n+1}.
\]
On the other hand, by the tower property,
\[
    \operatorname{Cov}(k)
    =
    \E{D_{n,k}}
    =
    \Ex{Z\sim\nu_{n,k}}{Z}.
\]
Therefore the marginal coverage gap is
\[
    \left|
        \operatorname{Cov}(k)
        -
        \frac{k}{n+1}
    \right|
    =
    \left|
        \Ex{Z\sim\nu_{n,k}}{Z}
        -
        \Ex{Z\sim\beta_{n,k}}{Z}
    \right|.
\]
By Proposition~\ref{prop:w1_dual_mean_bound},
\[
    \left|
        \Ex{Z\sim\nu_{n,k}}{Z}
        -
        \Ex{Z\sim\beta_{n,k}}{Z}
    \right|
    \leq
    W_1(\nu_{n,k},\beta_{n,k}),
\]
which proves the first claim.

If \(\nu_{n,k}\in\mathcal B_p(\beta_{n,k},\rho)\), then
Proposition~\ref{prop:wasserstein_monotonicity} gives
\[
    W_1(\nu_{n,k},\beta_{n,k})
    \leq
    W_p(\nu_{n,k},\beta_{n,k})
    \leq
    \rho.
\]
This proves the beta-neighborhood bound.

If $k_\gamma=\lceil(n+1)\gamma\rceil\leq n$, then
\[
    \frac{k_\gamma}{n+1}\geq \gamma
\]
and
\[
    0
    \leq
    \frac{k_\gamma}{n+1}-\gamma
    \leq
    \frac{1}{n+1}.
\]
The lower bound and the final absolute-error bound follow immediately from the
beta-neighborhood bound and the discretization bound above.
\end{proof}

\subsection{Proof of Theorem~\ref{thm:bad_calibration}}
\label{app:proof_bad_calibration}

\begin{proof}
Let \(B_{n,k}\sim\beta_{n,k}\), and let
\[
    \Tmon
    =
    F_{\nu_{n,k}}^{-1}\circ F_{\beta_{n,k}}
\]
be the monotone transport map from \(\beta_{n,k}\) to \(\nu_{n,k}\). Define
\[
    D^\star_{n,k}:=\Tmon(B_{n,k}).
\]
By Proposition~\ref{prop:monotone_transport}, \(D^\star_{n,k}\sim\nu_{n,k}\).
Therefore \(D^\star_{n,k}\stackrel d=D_{n,k}\), and
\[
    \Pr{D_{n,k}\leq t}
    =
    \Pr{\Tmon(B_{n,k})\leq t}.
\]
Define
\[
    G_\varepsilon
    :=
    \left\{
        |\Tmon(B_{n,k})-B_{n,k}|\leq \varepsilon
    \right\}.
\]
On \(G_\varepsilon\), if \(\Tmon(B_{n,k})\leq t\), then
\[
    B_{n,k}
    \leq
    \Tmon(B_{n,k})+\varepsilon
    \leq
    t+\varepsilon.
\]
Thus
\[
    \{\Tmon(B_{n,k})\leq t\}\cap G_\varepsilon
    \subseteq
    \{B_{n,k}\leq t+\varepsilon\}.
\]
It follows that
\[
    \Pr{D_{n,k}\leq t}
    \leq
    \Pr{B_{n,k}\leq t+\varepsilon}
    +
    \Pr{G_\varepsilon^c}.
\]
By Markov's inequality and the optimality of \(\Tmon\),
\[
\begin{aligned}
    \Pr{G_\varepsilon^c}
    &=
    \Pr{
        |\Tmon(B_{n,k})-B_{n,k}|>\varepsilon
    } \\
    &\leq
    \frac{
        \E{\left|\Tmon(B_{n,k})-B_{n,k}\right|^p}
    }{
        \varepsilon^p
    } \\
    &=
    \frac{
        W_p^p(\nu_{n,k},\beta_{n,k})
    }{
        \varepsilon^p
    }
    \leq
    \left(\frac{\rho}{\varepsilon}\right)^p.
\end{aligned}
\]
Combining the last two displays proves the claim.
\end{proof}

\subsection{Proof of Proposition~\ref{prop:decoupled_test_scores}}
\label{app:proof_decoupled_test_scores}

\begin{proof}
Since \(\widetilde T\) is independent of \(\mathcal C_n\), conditioning on the
calibration sample fixes \(T_{(k)}\) and gives
\[
    D_{n,k}
    =
    \Pr{\widetilde T\leq T_{(k)}\mid\mathcal C_n}
    =
    F_{\widetilde T}(T_{(k)}).
\]
This proves the first display.

Now suppose that the calibration scores have common continuous marginal
distribution function \(F_T\), and define \(U_i=F_T(T_i)\). By the probability
integral transform, each \(U_i\) is marginally uniform on \([0,1]\); no
independence among the \(U_i\)'s is required for this marginal statement. Since
\(F_T\) is monotone, the ordering is preserved, so
\[
    F_T(T_{(k)})=U_{(k)}.
\]
With the generalized inverse convention, this gives
\[
    F_{\widetilde T}(T_{(k)})
    \stackrel d=
    F_{\widetilde T}(F_T^{-1}(U_{(k)}))
    =
    h(U_{(k)}),
    \qquad
    h:=F_{\widetilde T}\circ F_T^{-1}.
\]
Therefore
\[
    \nu_{n,k}
    =
    \mathcal L(D_{n,k})
    =
    h_{\#}\mathcal L(U_{(k)}).
\]
In the i.i.d.\ uniform reference case, the order statistic satisfies
\(U_{(k)}\sim\beta_{n,k}\), and the final display follows.
\end{proof}

\subsection{Proof of Proposition~\ref{prop:dependent_calibration_counting}}
\label{app:proof_dependent_calibration_counting}

\begin{proof}
The event \(U_{(k)}\leq t\) is exactly the event that at least \(k\) calibration
variables are at most \(t\), which gives the first identity. For the i.i.d.\
uniform reference sample, the corresponding count is
\(\operatorname{Bin}(n,t)\), and its \(k\)-th order statistic has law
\(\beta_{n,k}\). The Wasserstein identity then follows from
Proposition~\ref{prop:w1_one_dimensional_formula}.
\end{proof}

\subsection{Details for the Half-Normal Scale-Shift Example}
\label{app:half_normal_scale_shift_details}

For \(\sigma>0\), the distribution function of \(|N(0,\sigma^2)|\) is
\[
    F_\sigma(t)
    =
    2\Phi(t/\sigma)-1,
    \qquad t\geq 0.
\]
Hence
\[
    F_\sigma^{-1}(u)
    =
    \sigma\,\Phi^{-1}\!\left(\frac{u+1}{2}\right),
    \qquad 0<u<1.
\]
Writing
\[
    q(u):=\Phi^{-1}\!\left(\frac{u+1}{2}\right),
\]
we have \(F_\sigma^{-1}(u)=\sigma q(u)\). Therefore the calibration-to-test
transport map is
\[
\begin{aligned}
    F_{\sigma_{\widetilde T}}
    \left(
        F_{\sigma_T}^{-1}(u)
    \right)
    &=
    F_{\sigma_{\widetilde T}}
    \left(
        \sigma_T q(u)
    \right) \\
    &=
    2\Phi\!\left(
        \frac{\sigma_T}{\sigma_{\widetilde T}}q(u)
    \right)-1  \\
    &=
    2\Phi\!\left(
        \frac{q(u)}{r}
    \right)-1
    =
    h_r(u),
\end{aligned}
\]
where \(r=\sigma_{\widetilde T}/\sigma_T\).

Let
\[
    B_{n,k}:=F_{\sigma_T}(T_{(k)}).
\]
Since the calibration sample is i.i.d.\ from \(F_{\sigma_T}\),
\[
    B_{n,k}\sim \operatorname{Beta}(k,n+1-k)=\beta_{n,k}.
\]
Moreover,
\[
\begin{aligned}
    D_{n,k}
    &=
    \Pr{
        \widetilde T\leq T_{(k)}
        \,\middle|\,
        T_1,\ldots,T_n
    } \\
    &=
    F_{\sigma_{\widetilde T}}(T_{(k)}) \\
    &=
    F_{\sigma_{\widetilde T}}
    \left(
        F_{\sigma_T}^{-1}(B_{n,k})
    \right) \\
    &=
    h_r(B_{n,k}).
\end{aligned}
\]
Thus
\[
    \nu_{n,k}^{(r)}
    =
    \mathcal L(D_{n,k})
    =
    (h_r)_\#\beta_{n,k},
\]
and
\[
    \operatorname{Cov}(k)
    =
    \mathbb E[h_r(B_{n,k})].
\]

We next derive the density of the transported law. Let \(z\in(0,1)\). If
\(z=h_r(u)\), then
\[
    z
    =
    2\Phi\!\left(\frac{q(u)}{r}\right)-1.
\]
Therefore,
\[
    \Phi^{-1}\!\left(\frac{z+1}{2}\right)
    =
    \frac{q(u)}{r},
\]
and hence
\[
    q(u)=rq(z).
\]
It follows that
\[
    h_r^{-1}(z)
    =
    2\Phi(rq(z))-1.
\]
Differentiating with respect to \(z\),
\[
\begin{aligned}
    \frac{d}{dz}h_r^{-1}(z)
    &=
    2\phi(rq(z))\,r\,q'(z).
\end{aligned}
\]
Since
\[
    q(z)=\Phi^{-1}\!\left(\frac{z+1}{2}\right),
\]
we have
\[
    q'(z)
    =
    \frac{1}{2\phi(q(z))}.
\]
Therefore,
\[
    \frac{d}{dz}h_r^{-1}(z)
    =
    r\frac{\phi(rq(z))}{\phi(q(z))}.
\]
If \(f_{n,k}\) denotes the density of
\(\beta_{n,k}=\operatorname{Beta}(k,n+1-k)\), then the density of
\(D_{n,k}=h_r(B_{n,k})\) is
\[
\begin{aligned}
    f_{D,r}(z)
    &=
    f_{n,k}\!\left(h_r^{-1}(z)\right)
    \left|
        \frac{d}{dz}h_r^{-1}(z)
    \right|  \\
    &=
    f_{n,k}\!\left(
        2\Phi(rq(z))-1
    \right)
    r\frac{\phi(rq(z))}{\phi(q(z))},
    \qquad 0<z<1.
\end{aligned}
\]

We now justify the sharpness of the Wasserstein bound in this example. The map
\(h_r\) is increasing on \((0,1)\). Moreover, since \(q(u)>0\) for
\(u\in(0,1)\),
\[
    r>1 \implies h_r(u)<u,
    \qquad
    r<1 \implies h_r(u)>u.
\]
Thus \(h_r-\operatorname{id}\) has constant sign on \((0,1)\). The monotone
coupling \((B_{n,k},h_r(B_{n,k}))\) is optimal in one dimension, and therefore
\[
\begin{aligned}
    W_1(\nu_{n,k}^{(r)},\beta_{n,k})
    &=
    \mathbb E\!\left[
        |h_r(B_{n,k})-B_{n,k}|
    \right]  \\
    &=
    \left|
        \mathbb E[h_r(B_{n,k})-B_{n,k}]
    \right|  \\
    &=
    \left|
        \operatorname{Cov}(k)-\frac{k}{n+1}
    \right|.
\end{aligned}
\]
The last equality uses \(\mathbb E[B_{n,k}]=k/(n+1)\).

Finally, we derive the local small-shift approximation. Write \(r=e^\delta\).
Then
\[
    h_{e^\delta}(u)
    =
    2\Phi(e^{-\delta}q(u))-1.
\]
Differentiating with respect to \(\delta\),
\[
    \frac{\partial}{\partial\delta}h_{e^\delta}(u)
    =
    -2e^{-\delta}\phi(e^{-\delta}q(u))q(u).
\]
At \(\delta=0\),
\[
    \left.
    \frac{\partial}{\partial\delta}h_{e^\delta}(u)
    \right|_{\delta=0}
    =
    -2\phi(q(u))q(u).
\]
Thus
\[
    h_{e^\delta}(u)
    =
    u
    -
    2\delta\,\phi(q(u))q(u)
    +
    O(\delta^2).
\]
Taking expectations under \(B_{n,k}\sim\beta_{n,k}\) gives
\[
    \operatorname{Cov}(k)
    =
    \frac{k}{n+1}
    -
    2\delta\,
    \mathbb E\!\left[
        \phi(q(B_{n,k}))q(B_{n,k})
    \right]
    +
    O(\delta^2).
\]
For \(k_\gamma=\lceil(n+1)\gamma\rceil\), the beta law concentrates around
\(\gamma\). Therefore,
\[
    \mathbb E\!\left[
        \phi(q(B_{n,k_\gamma}))q(B_{n,k_\gamma})
    \right]
    =
    \phi(q(\gamma))q(\gamma)
    +
    O(n^{-1}),
\]
and hence
\[
    \operatorname{Cov}(k_\gamma)-\gamma
    =
    -2\delta\,\phi(q(\gamma))q(\gamma)
    +
    O(\delta^2)
    +
    O(n^{-1}).
\]

\subsection{Proof of Proposition~\ref{prop:quantile_be_beta_transport}}
\label{app:proof_quantile_be_beta_transport}

\begin{proof}
Write
\[
    s_\gamma:=\sqrt{\gamma(1-\gamma)}.
\]
Let \(B_{n,k_\gamma}\sim\beta_{n,k_\gamma}\), and denote by \(F_U\) and \(F_B\) the
distribution functions of \(U_{(k_\gamma)}\) and \(B_{n,k_\gamma}\), respectively. Since
both laws are supported on \([0,1]\), Proposition~\ref{prop:w1_one_dimensional_formula}
gives
\[
    W_1\!\left(
        \mathcal L(U_{(k_\gamma)}),
        \beta_{n,k_\gamma}
    \right)
    =
    \int_0^1
    \left|
        F_U(t)-F_B(t)
    \right|
    dt.
\]

We now rewrite the two distribution functions on the central-limit scale. For
\(x\in\mathbb R\), set
\[
\begin{aligned}
    A_n(x)
    &:=
    \Pr{
        \sqrt n\{U_{(k_\gamma)}-\gamma\}\leq x
    },
    \\
    A_n^0(x)
    &:=
    \Pr{
        \sqrt n\{B_{n,k_\gamma}-\gamma\}\leq x
    }.
\end{aligned}
\]
By assumption,
\[
    \sup_{x\in\mathbb R}
    \left|
        A_n(x)
        -
        \Phi\!\left(\frac{x}{\tau_\gamma}\right)
    \right|
    \leq
    a_n.
\]
For the i.i.d.\ beta benchmark, the classical Berry--Esseen theorem for sample
quantiles gives
\[
    \sup_{x\in\mathbb R}
    \left|
        A_n^0(x)
        -
        \Phi\!\left(\frac{x}{s_\gamma}\right)
    \right|
    \leq
    c_n,
    \qquad
    c_n=O(n^{-1/2}).
\]
This is the standard Berry--Esseen approximation for the \(k_\gamma\)-th order
statistic of an i.i.d.\ uniform sample. Since
\(k_\gamma=\lceil(n+1)\gamma\rceil\), we have
\(k_\gamma/(n+1)=\gamma+O(n^{-1})\); this centering difference is absorbed by
the \(O(n^{-1/2})\) remainder.

Now fix \(t\in[0,1]\). Then
\[
    F_U(t)=A_n(\sqrt n(t-\gamma)),
    \qquad
    F_B(t)=A_n^0(\sqrt n(t-\gamma)).
\]
Using the triangle inequality inside the integral,
\[
\begin{aligned}
    \left|
        F_U(t)-F_B(t)
    \right|
    &\leq
    \left|
        A_n(\sqrt n(t-\gamma))
        -
        \Phi\!\left(\frac{\sqrt n(t-\gamma)}{\tau_\gamma}\right)
    \right| \\
    &\quad+
    \left|
        \Phi\!\left(\frac{\sqrt n(t-\gamma)}{\tau_\gamma}\right)
        -
        \Phi\!\left(\frac{\sqrt n(t-\gamma)}{s_\gamma}\right)
    \right| \\
    &\quad+
    \left|
        \Phi\!\left(\frac{\sqrt n(t-\gamma)}{s_\gamma}\right)
        -
        A_n^0(\sqrt n(t-\gamma))
    \right|.
\end{aligned}
\]
Integrating over \(t\in[0,1]\) and using the two uniform Berry--Esseen bounds
therefore yields
\[
\begin{aligned}
    W_1\!\left(
        \mathcal L(U_{(k_\gamma)}),
        \beta_{n,k_\gamma}
    \right)
    &\leq
    a_n
    +
    c_n
    +
    I_n,
\end{aligned}
\]
where
\[
    I_n
    :=
    \int_0^1
    \left|
        \Phi\!\left(\frac{\sqrt n(t-\gamma)}{\tau_\gamma}\right)
        -
        \Phi\!\left(\frac{\sqrt n(t-\gamma)}{s_\gamma}\right)
    \right|
    dt.
\]

It remains to identify \(I_n\). With the change of variables
\(x=\sqrt n(t-\gamma)\),
\[
\begin{aligned}
    I_n
    &=
    \frac1{\sqrt n}
    \int_{-\sqrt n\gamma}^{\sqrt n(1-\gamma)}
    \left|
        \Phi\!\left(\frac{x}{\tau_\gamma}\right)
        -
        \Phi\!\left(\frac{x}{s_\gamma}\right)
    \right|
    dx \\
    &\leq
    \frac1{\sqrt n}
    \int_{\mathbb R}
    \left|
        \Phi\!\left(\frac{x}{\tau_\gamma}\right)
        -
        \Phi\!\left(\frac{x}{s_\gamma}\right)
    \right|
    dx.
\end{aligned}
\]
The functions \(x\mapsto\Phi(x/\tau_\gamma)\) and \(x\mapsto\Phi(x/s_\gamma)\)
are the distribution functions of \(N(0,\tau_\gamma^2)\) and
\(N(0,s_\gamma^2)\). Thus the one-dimensional \(W_1\) formula on
\(\mathbb R\) gives
\[
    \int_{\mathbb R}
    \left|
        \Phi\!\left(\frac{x}{\tau_\gamma}\right)
        -
        \Phi\!\left(\frac{x}{s_\gamma}\right)
    \right|
    dx
    =
    W_1\!\left(
        N(0,\tau_\gamma^2),
        N(0,s_\gamma^2)
    \right).
\]
By Lemma~\ref{lem:w1_centered_normals},
\[
    W_1\!\left(
        N(0,\tau_\gamma^2),
        N(0,s_\gamma^2)
    \right)
    =
    \sqrt{\frac{2}{\pi}}
    |\tau_\gamma-s_\gamma|.
\]
Consequently,
\[
    I_n
    \leq
    \sqrt{\frac{2}{\pi n}}
    \left|
        \tau_\gamma-\sqrt{\gamma(1-\gamma)}
    \right|.
\]

Combining this with \(c_n=O(n^{-1/2})\) gives
\[
    W_1\!\left(
        \mathcal L(U_{(k_\gamma)}),
        \beta_{n,k_\gamma}
    \right)
    \leq
    a_n
    +
    \sqrt{\frac{2}{\pi n}}
    \left|
        \tau_\gamma-\sqrt{\gamma(1-\gamma)}
    \right|
    +
    O(n^{-1/2}),
\]
which is the first claim. If \(a_n=O(n^{-1/2})\), then the term \(a_n\) is
absorbed into the final remainder, giving the second display.
\end{proof}

\section{Example additional results}
This section presents additional illustrations for the two worked examples 
in the main text. In both cases, $W_1(\nu_{n,k}, \beta_{n,k})$, the 
coverage gap and the transported laws $\nu_{n,k}$ are estimated via Monte Carlo with $50{,}000$ simulations per configuration, by drawing directly from the underlying process in each example and computing the relevant quantities empirically. Shaded bands throughout indicate $\pm 2$ standard errors across simulations. In general, standard errors are very low; the wider band for the coverage gap observed in Figure \ref{fig:fig_ar1_thm7} near zero is a consequence of the logarithmic scale employed, as the absolute 
error remains stable while the gap itself approaches zero.

\subsection{Half-normal illustrations and results}
Figure~\ref{fig:fig_halfnormal} complements the density plot in the main 
text by showing the full CDF deformation across a wider range of $r$ and 
$n$. The left panels make the shaded $W_1$ area directly visible for each 
$(r, n)$ pair, confirming that the deformation is stable across calibration 
sizes. The right panel traces $W_1$ and the coverage gap jointly as 
functions of $r$, verifying the exact equality from both sides of $r = 1$ 
and across all three values of $n$.
\begin{figure}[h]
    \centering
    \includegraphics[width=\linewidth]{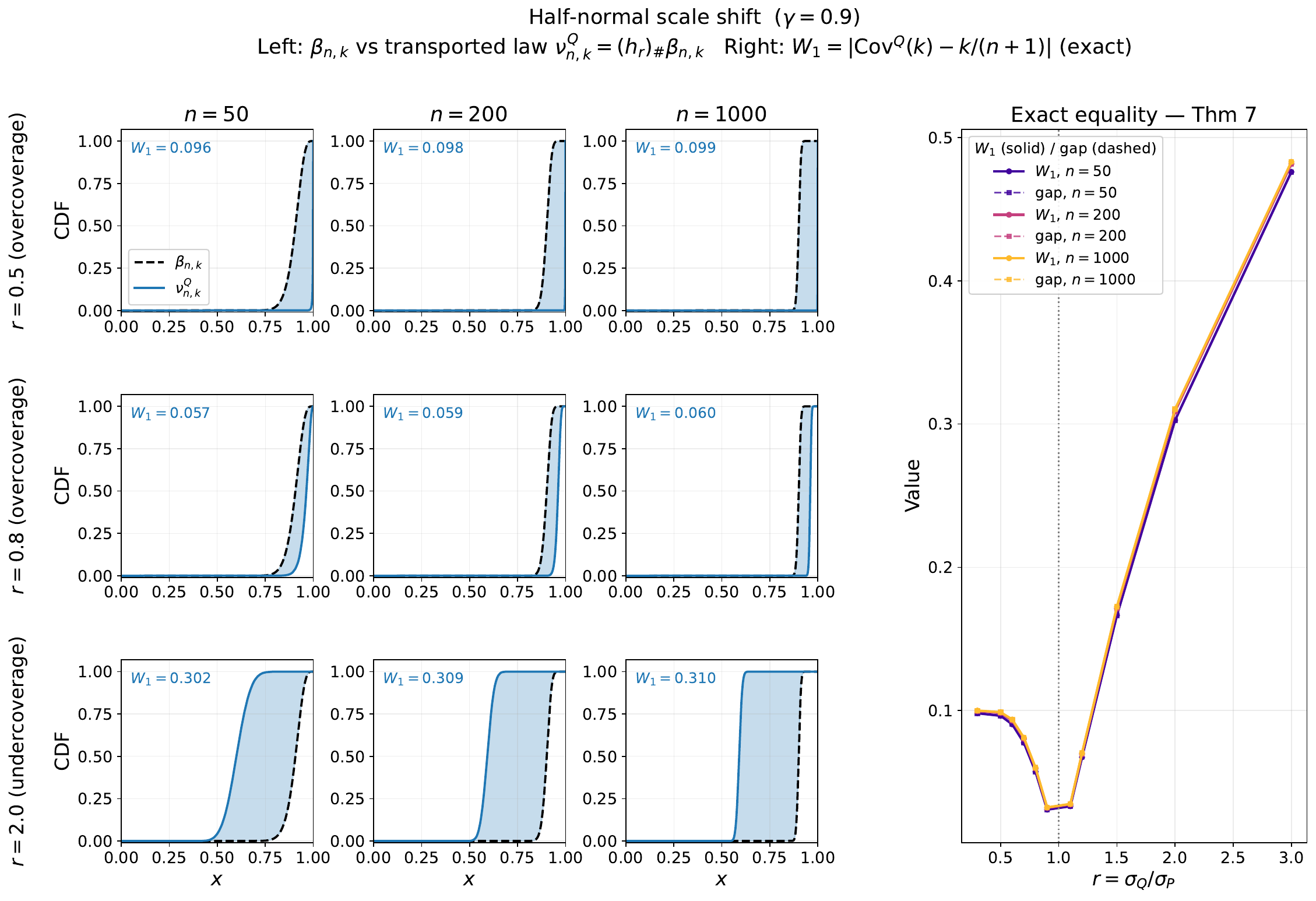}
    \caption{
    Half-normal scale-shift example with $\gamma = 0.9$. \textbf{Left:} CDFs of 
$\nu_{n,k}^{(r)} = (h_r)_\#\beta_{n,k}$ (solid blue) against $\beta_{n,k}$ 
(dashed black) for $r \in \{0.5, 0.8, 2.0\}$ and $n \in \{50, 200, 1000\}$; 
the shaded area equals $W_1(\nu_{n,k}^{(r)}, \beta_{n,k})$ and is 
essentially constant across $n$. \textbf{Right:} exact identity 
$W_1(\nu_{n,k}^{(r)}, \beta_{n,k}) = |\operatorname{Cov}(k) - k/(n+1)|$ 
verified across $r \in [0.3, 3.0]$; the asymmetry around $r = 1$ reflects 
the nonlinearity of $h_r$, with undercoverage ($r > 1$) incurring a larger 
transport cost than overcoverage ($r < 1$) for the same $|r - 1|$.}
    \label{fig:fig_halfnormal}
\end{figure}

\subsection{AR(1) illustrations and results}
\label{app:ar_1_illustrations}
Figures~\ref{fig:fig_ar1_bound_h}, \ref{fig:fig_ar1_thm7}, 
and~\ref{fig:fig_ar1_coverage_bound} complement Figure~\ref{fig:fig_ar1_cdfs} 
by quantifying the deformation across a wider range of configurations.
Figure~\ref{fig:fig_ar1_bound_h} traces $W_1$ and the 
bound~\eqref{eq:phi_mixing_w_bound} as functions of $\ell$ for fixed 
$n = 200$, showing the geometric decay and the residual Berry--Esseen floor.
Figure~\ref{fig:fig_ar1_thm7} varies $n$ for fixed $\ell \in \{1, 10, 25\}$, 
confirming the chain coverage gap $\leq W_1 \leq$ Berry--Esseen bound across 
all configurations.
Figure~\ref{fig:fig_ar1_coverage_bound} translates these into 
bad-calibration probabilities, with the bound tightening as $\ell$ increases.

\begin{figure}[h]
    \centering
    \includegraphics[width=\linewidth]{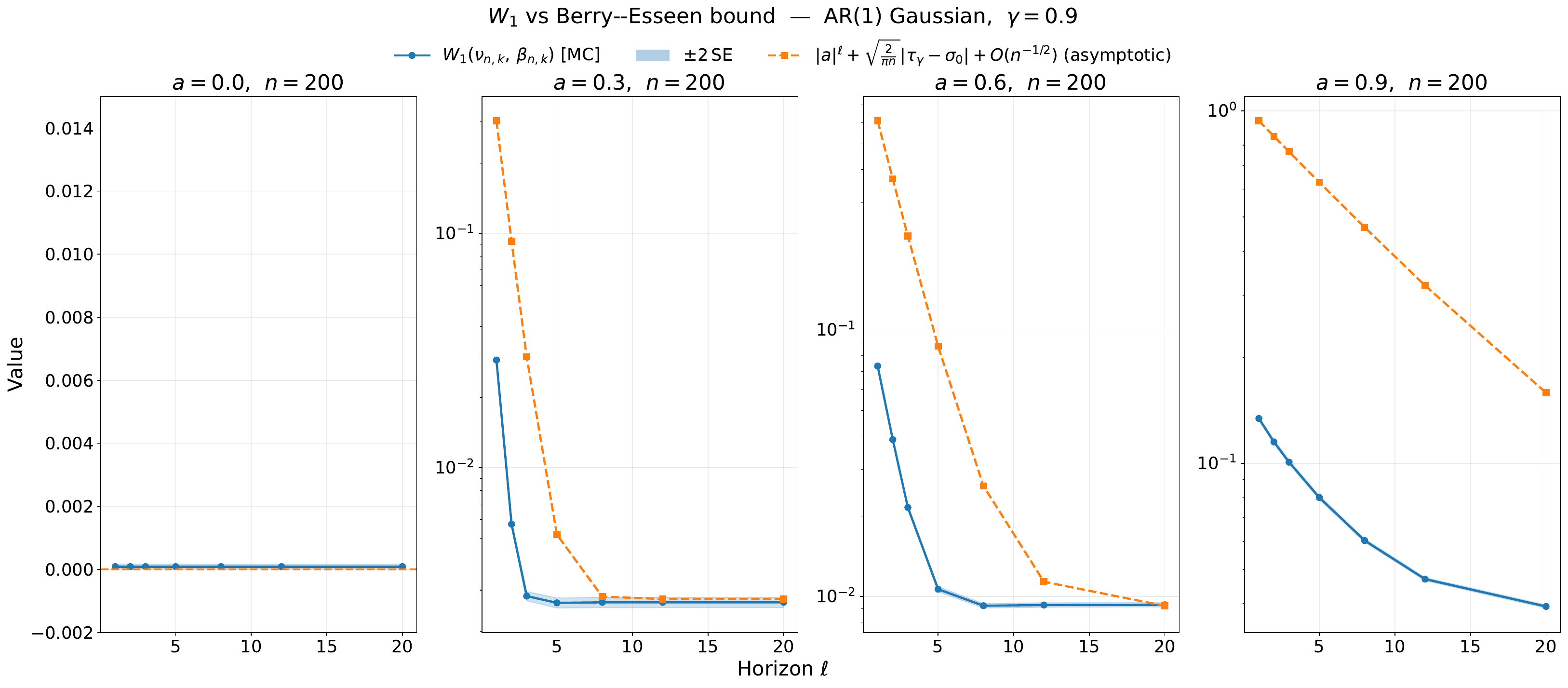}
    \caption{$W_1(\nu_{n,k_\gamma}, \beta_{n,k_\gamma})$ (solid blue) and the 
asymptotic bound~\eqref{eq:phi_mixing_w_bound} (dashed orange) as functions 
of $\ell$, for $n = 200$, $\gamma = 0.9$, and $a \in \{0, 0.3, 0.6, 0.9\}$. 
For $a = 0$, $W_1$ is negligible for all $\ell$; for $a > 0$, both decay 
geometrically and level off at the Berry--Esseen floor, which depends only 
on $n$ and the long-run variance $\tau_\gamma^2$.}
    \label{fig:fig_ar1_bound_h}
\end{figure}

\begin{figure}[h]
    \centering
    \includegraphics[width=\linewidth]{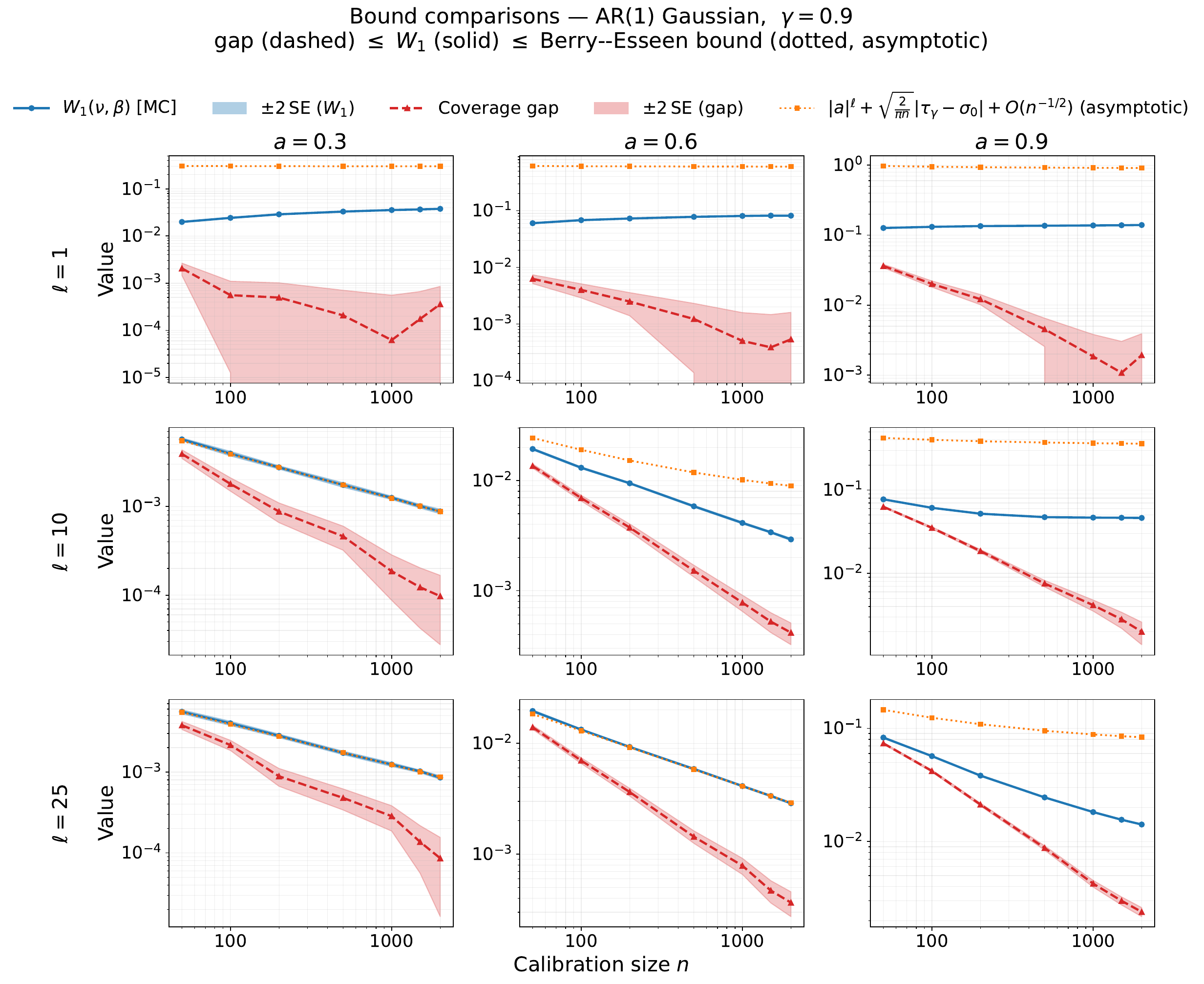}
    \caption{Chain of bounds $|\operatorname{Cov}(k_\gamma) - k_\gamma/(n+1)| 
\leq W_1(\nu_{n,k_\gamma}, \beta_{n,k_\gamma}) \leq |a|^\ell + 
\sqrt{2/(\pi n)}\,|\tau_\gamma - \sqrt{\gamma(1-\gamma)}| + O(n^{-1/2})$ 
as a function of $n$, for $\gamma = 0.9$, $a \in \{0.3, 0.6, 0.9\}$, and 
$\ell \in \{1, 10, 25\}$. Solid blue: Monte-Carlo $W_1$; dashed red: 
coverage gap; dotted orange: asymptotic bound. The chain holds throughout 
the asymptotic regime; for small $n$ with large $a$ and small $\ell$ the 
bound is conservative, and tightens as $|a|^\ell$ decays.}
    \label{fig:fig_ar1_thm7}
\end{figure}

\begin{figure}[h]
    \centering
    \includegraphics[width=\linewidth]{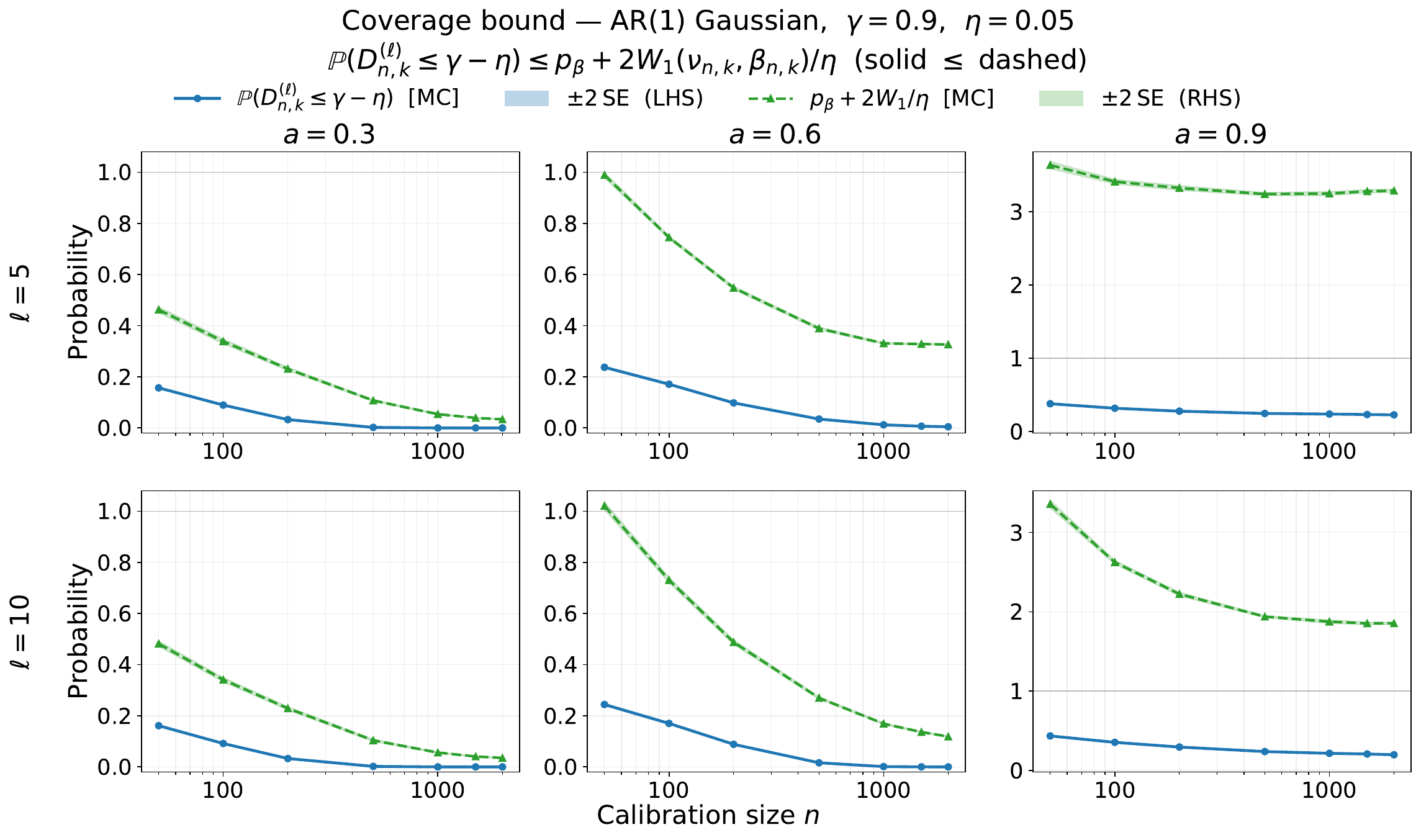}
    \caption{Bad-calibration bound from Theorem~\ref{thm:bad_calibration}: $\mathbb{P}(D^{(\ell)}_{n,k_\gamma} \leq \gamma - \eta)$ (solid blue)  against $\mathbb{P}(B_{n,k_\gamma} \leq \gamma - \eta/2) + 2W_1/\eta$ 
(dashed green) as a function of $n$, for $\gamma = 0.9$, $\eta = 0.05$, 
$a \in \{0.3, 0.6, 0.9\}$, and $\ell \in \{5, 10\}$. The bound can exceed 
one for large $a$ and small $\ell$, where the Markov penalty $2W_1/\eta$ 
dominates, and tightens as $\ell$ grows and $W_1$ decays.}
    \label{fig:fig_ar1_coverage_bound}
\end{figure}

\end{document}